\documentclass{article}

\newif\ifarxiv 
\arxivtrue
\newif\ifneurips

\ifneurips
\usepackage{neurips_2026}
\fi
\usepackage{enumitem}

\usepackage[utf8]{inputenc} 
\usepackage[T1]{fontenc}    
\usepackage{hyperref}       
\usepackage{url}            
\usepackage{booktabs}       
\usepackage{amsfonts}       
\usepackage{nicefrac}       
\usepackage{microtype}      
\usepackage{xcolor}         

\usepackage{natbib}
\PassOptionsToPackage{dvipsnames}{xcolor}
\PassOptionsToPackage{hypertexnames=false}{hyperref}

\usepackage{amsmath, amssymb, amsthm, mathtools}

\newtheorem{theorem}{Theorem}[section]
\newtheorem{claim}{Claim}[section]
\newtheorem{lemma}{Lemma}[section]
\newtheorem{proposition}{Proposition}[section]

\newtheorem{definition}{Definition}[section]
\newtheorem{assumption}{Assumption}[section]

\theoremstyle{remark}
\newtheorem{remark}[theorem]{Remark}

\newtheorem*{theorem*}{Theorem}
\newtheorem*{lemma*}{Lemma}
\newtheorem*{proposition*}{Proposition}
\newtheorem*{claim*}{Claim}
\newtheorem*{corollary*}{Corollary}
\newtheorem*{definition*}{Definition}
\newtheorem*{remark*}{Remark}

\newcommand{\ar}[1]{\textcolor{brown}{[AR: #1]}}

\usepackage{appendix}

\usepackage[utf8]{inputenc} 

\usepackage[T1]{fontenc}    
\usepackage{hyperref}       

\usepackage{url}            
\usepackage{booktabs}       
\usepackage{amsfonts}       
\usepackage{latexsym,amssymb,amscd,amsopn,amsmath}
\usepackage{bbm}
\usepackage{mathrsfs}
\usepackage{nicefrac}       
\usepackage{microtype}      
\usepackage{xcolor}         
\usepackage{algorithm}
\usepackage{verbatim}
\usepackage[noend]{algpseudocode}

\usepackage{xspace}
\usepackage{mathtools}
\usepackage{svg}

\usepackage{makecell}
\usepackage{aliascnt}

\usepackage[nameinlink,capitalize]{cleveref}
\usepackage{amsthm}
\usepackage{geometry}[1in]

\usepackage{thmtools}

\definecolor{dgreen}{rgb}{0,0.5,0}
\hypersetup{
  colorlinks=true,
  linkcolor=blue,
  filecolor=blue,
  citecolor = dgreen,      
  urlcolor=cyan,
}

\numberwithin{theorem}{section}
\numberwithin{lemma}{section}
\numberwithin{definition}{section}

\newcommand{\nc}{\newcommand}
\nc{\DMO}{\DeclareMathOperator}
\newcount\Comments  
\DeclareMathOperator*{\argmin}{arg\,min} 
\DeclareMathOperator*{\argmax}{arg\,max}

\nc{\Moracle}{\MM^{\mathsf{oracle}}}
\nc{\tilMoracle}{\til{\MM}^{\mathsf{oracle}}}
\nc{\SimulateReduction}{\texttt{SimulateReduction}\xspace}
\nc{\TestReduction}{\texttt{DistinguishReduction}\xspace}
\nc{\SimulateSampling}{\texttt{SimulateSampling}\xspace}
\nc{\SimulateRegression}{\texttt{SimulateRegression}\xspace}
\nc{\Osample}{{\MO_{\mathsf{samp}}}}
\nc{\Oregress}{{\MO_{\mathsf{regress}}}}
\nc{\Oregressp}{{\MO'_{\mathsf{regress}}}}
\nc{\Obandits}{{\MO_{\mathsf{bandits}}}}
\nc{\dom}{\mathsf{dom}}
\nc{\SF}{\mathscr{F}}
\nc{\Fchernoff}{\MF^{\mathsf{chernoff}}}

\DMO{\prox}{prox}
\DMO{\Span}{span}
\DMO{\UCB}{UCB}
\DMO{\LCB}{LCB}
\nc{\expl}[2]{\ME^{#1}_{#2}}
\nc{\tilmdp}[1]{\til \MM({#1})}
\nc{\barpdp}[2]{\ol \MP_{#1}({#2})}
\nc{\barmdp}[1]{\ol \MM({#1})}
\nc{\hatmdp}[1]{\wh \MM({#1})}
\nc{\rem}[2]{\MR_{#1}({#2})}
\nc{\Pigen}{\Pi^{\rm gen}}
\nc{\Pidet}{\Pi^{\rm det}}
\nc{\PiZ}{\Pi_{\SZ}^{\rm markov}}
\nc{\und}[3]{\MU_{{#1}}^{{#2}}({#3})}
\nc{\zlow}[2]{\MZ_{{#1}}^\lowv({#2})}
\nc{\dg}{\dagger}
\nc{\bB}{\mathbf{B}}
\nc{\unif}{\mu_{\rm unif}}
\nc{\indsig}[2]{\mathcal{I}_{#1}({#2})}
\nc{\total}{{\rm fin}}
\nc{\early}{{\rm pre}}
\nc{\zsink}{z_{\rm sink}}
\nc{\lowv}{{\rm low}}
\nc{\oo}[1]{\texttt{o}({#1})}
\nc{\posnrm}[1]{\left[ {#1} \right]_+}
\nc{\negnrm}[1]{\left[ {#1} \right]_-}
\nc{\tvnrm}[1]{\left\| {#1} \right\|_1}
\nc{\absval}[1]{\left| {#1} \right|}
\nc{\normalize}[1]{\mathfrak{n}\left({#1}\right)}

\nc{\SZ}{\textsf{Z}}
\nc{\SO}{\textsf{O}}
\nc{\suff}[2]{{\rm suff}_{#1}({#2})}
\nc{\UPhi}{\mathscr{U}_{X,H}}
\nc{\UPhis}{\til{\mathscr{U}}_{X,H,\MF}}
\nc{\SV}{\mathscr{V}}
\nc{\Phiset}{\Phi_{X,H}}
\nc{\Phisets}{\til{\Phi}_{X,H,\MF}}
\nc{\Lyu}{{\mathtt{Lyu}}}
\nc{\wAlg}{{\widetilde \Alg}}

\nc{\ApproxMDP}{\texttt{ConstructMDP}\xspace}
\nc{\mainalg}{\texttt{BaSeCAMP}\xspace} 
\nc{\bspanner}{\texttt{BarySpannerPolicy}\xspace}

\nc{\gamvec}{\gamma}
\nc{\til}{\widetilde}
\nc{\td}{\tilde}
\nc{\wh}{\widehat}
\nc{\old}[1]{\ifnum\Comments=1 {\color{brown}  [COPIED: #1]}\fi}
\definecolor{darkgreen}{rgb}{0.0, 0.5, 0.0}
\nc{\noah}[1]{\ifnum\Comments=1 {\color{darkgreen} [ng: #1]}\fi}
\nc{\dhruv}[1]{\ifnum\Comments=1 {\color{purple} [dr: #1]}\fi}
\nc{\BP}{\mathbb{P}}
\nc{\BM}{\mathbb{M}}
\nc{\bbapx}{\bb^{\rm apx}}
\nc{\bbapxs}[1]{\bb^{\rm apx, {#1}}}

\nc{\fools}[3]{\MF_{#3}({#1}, {#2})}
\nc{\fool}[2]{\MF({#1},{#2})}
\nc{\clip}[2]{{\rm clip}\left[ \left. {#1} \right| {#2} \right]}
\nc{\imax}{\omega}
\DMO{\conv}{conv}
\nc{\MH}{\mathcal{H}}
\nc{\CH}{\mathscr{H}}
\nc{\CB}{\mathscr{B}}
\nc{\cD}{\mathscr{D}}
\nc{\MC}{\mathcal{C}}
\nc{\st}{\star}

\nc{\lng}{\langle}
\nc{\rng}{\rangle}
\DMO{\OOPT}{opt}
\nc{\dopt}[2]{\ell_{\OOPT}({#1},{#2})}
\nc{\grad}{\nabla}
\nc{\MG}{\mathcal{G}}
\nc{\MP}{\mathcal{P}}
\nc{\PP}{\mathbb{P}}
\nc{\TT}{\mathbb{T}}
\nc{\TTmax}{\TT_{\max}}
\DMO{\Ham}{Ham}
\DMO{\Gap}{Gap}
\DMO{\GD}{GD}
\DMO{\GDA}{GDA}
\DMO{\EG}{EG}
\DMO{\OGDA}{OGDA}
\DMO{\Unif}{Unif}
\DMO{\Tr}{Tr}
\nc{\ul}{\underline}
\nc{\ol}{\overline}
\nc{\Qu}{\ul{Q}}
\nc{\Qo}{\ol{Q}}
\nc{\Ro}{\ol{R}}
\nc{\Vu}{\ul{V}}
\nc{\Vo}{\ol{V}}
\nc{\RanQ}{\Delta Q}
\nc{\RanV}{\Delta V}
\nc{\clipQ}{\Delta \breve{Q}}
\nc{\frzQ}{\Delta \mathring{Q}}
\nc{\clipV}{\Delta \breve{V}}
\nc{\clipdelta}{\breve{\delta}}
\nc{\cliptheta}{\breve{\theta}}
\nc{\delmin}{\Delta_{{\rm min}}}
\nc{\delmins}[1]{\Delta_{{\rm min},{#1}}}
\nc{\gapfinal}[1]{\max \left\{ \frac{\frzQ_{{#1}}^{k^\st}(x,a)}{2H}, \frac{\delmin}{4H} \right\}}
\nc{\post}[2]{R({#1}; {#2})}
\nc{\posts}[3]{R_{#3}({#1}; {#2})}
\nc{\MAJ}{\mathsf{MAJ}}
\nc{\Dnull}{D^{\circ}}

\nc{\BKW}{\mathtt{BKW}}
\nc{\Dec}{\mathtt{Dec}}
\nc{\delreg}{\delta_{\mathsf{reg}}}
\nc{\delreal}{\delta_{\mathsf{real}}}
\nc{\Sreg}{S_{\mathsf{Reg}}}
\nc{\Treg}{T_{\mathsf{Reg}}}
\nc{\PC}{\mathtt{PC}}
\nc{\alphaPC}{\alpha_{\mathsf{PC}}}
\nc{\TPC}{T_{\mathsf{PC}}}
\nc{\SPC}{S_{\mathsf{PC}}}
\nc{\delsmall}{{\delta_{\mathsf{small}}}}
\nc{\CL}{\mathtt{ContrastLearn}}
\nc{\Select}{\mathtt{Select}}
\nc{\piunif}{{\pi_{\mathsf{unif}}}}
\nc{\picov}{{\pi_{\mathsf{cov}}}}
\nc{\ZZ}{\mathbb{Z}}
\nc{\sk}{{\mathsf{sk}}}
\nc{\Enc}{\mathtt{Enc}}
\nc{\EntLPN}{\mathtt{EntangleLPN}}
\nc{\mureg}{{\mu_{\mathsf{reg}}}}
\nc{\PPE}{\mathtt{PPE}}
\nc{\FQI}{\mathtt{FQI}}
\nc{\False}{\mathtt{False}}
\nc{\True}{\mathtt{True}}
\nc{\epreg}{\epsilon_{\mathsf{reg}}}
\nc{\algnst}[1]{\begin{align*}#1\end{align*}}
\nc{\algn}[1]{\begin{align}#1\end{align}}
\nc{\matx}[1]{\left(\begin{matrix}#1\end{matrix}\right)}

\nc{\pimix}{{\pi_{\mathsf{mix}}}}
\nc{\BPC}{{B_{\mathsf{PC}}}}
\nc{\size}{\mathrm{size}}
\nc{\OLIVE}{\texttt{OLIVE}}
\nc{\RP}{\textsf{RP}}
\nc{\OPT}{\mathrm{OPT}}
\nc{\cprp}{c_{\mathsf{PRP}}}
\nc{\Mtoy}{\MM_{\mathsf{toy}}}
\nc{\Brute}{\mathtt{Brute}}

\nc{\nuu}{\nu}

\nc{\bel}[1]{\mathbf{b}({#1})}
\nc{\nbel}[1]{\bar{\mathbf{b}}({#1})}
\nc{\sbel}[2]{\mathbf{b}'_{#1}({#2})}
\nc{\nsbel}[2]{\bar{\mathbf{b}}'_{#1}({#2})}

\nc{\bv}{\mathbf{v}}
\nc{\bone}{\mathbf{1}}
\nc{\bX}{\mathbf{X}}
\nc{\be}{\mathbf{e}}
\nc{\bY}{\mathbf{Y}}
\nc{\bG}{\mathbf{G}}
\nc{\bz}{\mathbf{z}}
\nc{\bw}{\mathbf{w}}
\nc{\bA}{\mathbf{A}}
\nc{\bJ}{\mathbf{J}}
\nc{\bK}{\mathbf{K}}
\nc{\bb}{\mathbf{b}}
\nc{\ba}{\mathbf{a}}
\nc{\bs}{\mathbf{s}}
\nc{\bzero}{\mathbf{0}}
\nc{\bi}{\mathbf{i}}
\nc{\Edistinct}{\ME^{\mathsf{distinct}}}
\nc{\bc}{\mathbf{c}}
\nc{\bC}{\mathbf{C}}
\nc{\BR}{\mathbb R}
\nc{\BA}{\mathbb{A}}
\nc{\SA}{\mathscr{A}}
\nc{\BC}{\mathbb C}
\nc{\bx}{\mathbf{x}}
\nc{\bS}{\mathbf{S}}
\nc{\bM}{\mathbf{M}}
\nc{\bR}{\mathbf{R}}
\nc{\bN}{\mathbf{N}}
\nc{\by}{\mathbf{y}}
\nc{\sy}{y}
\nc{\sx}{x}

\nc{\cF}{\mathcal{F}}
\nc{\cE}{\mathcal{E}}
\nc{\MO}{\mathcal O}
\nc{\MQ}{\mathcal{Q}}
\nc{\CO}{\mathscr{O}}
\nc{\MU}{\mathcal{U}}
\nc{\ME}{\mathcal{E}}
\nc{\MN}{\mathcal{N}}
\nc{\MK}{\mathcal{K}}
\nc{\MM}{\mathcal{M}}
\nc{\MS}{\mathcal{S}}
\nc{\MT}{\mathcal{T}}
\nc{\BF}{\mathbb F}
\nc{\BQ}{\mathbb Q}
\nc{\MX}{\mathcal{X}}
\nc{\MA}{\mathcal{A}}
\nc{\MD}{\mathcal{D}}
\nc{\MB}{\mathcal{B}}
\nc{\MZ}{\mathcal{Z}}
\nc{\MJ}{\mathcal{J}}
\nc{\MW}{\mathcal{W}}
\nc{\MR}{\mathcal{R}}
\nc{\MY}{\mathcal{Y}}
\nc{\ML}{\mathcal{L}}
\nc{\BZ}{\mathbb Z}
\nc{\BN}{\mathbb N}
\nc{\ep}{\epsilon}
\nc{\gapfn}[1]{\varepsilon_{#1}}
\nc{\ggapfn}[2]{\varphi_{#1}({#2})}
\nc{\epsahk}{\gapfn{0}}
\nc{\BH}{\mathbb H}
\nc{\BG}{\mathbb{G}}
\nc{\D}{\Delta}
\nc{\MF}{\mathcal{F}}
\nc{\One}{\mathbbm{1}}
\nc{\bOne}{\mathbf{1}}
\nc{\Aopt}{\mathcal{A}^{\rm opt}}
\nc{\Amul}{\mathcal{A}^{\rm mul}}

\nc{\SP}{\mathsf P}
\nc{\SQ}{\mathsf Q}

\nc{\DO}{\accentset{\circ}{\D}}
\nc{\mf}{\mathfrak}
\nc{\mfp}{\mathfrak{p}}
\nc{\mfq}{\mf{q}}
\nc{\Sp}{\mbox{Spec}}
\nc{\Spm}{\mbox{Specm}}
\nc{\hookuparrow}{\mathrel{\rotatebox[origin=c]{90}{$\hookrightarrow$}}}
\nc{\hookdownarrow}{\mathrel{\rotatebox[origin=c]{-90}{$\hookrightarrow$}}}
\nc{\hra}{\hookrightarrow}
\nc{\tra}{\twoheadrightarrow}
\nc{\sgn}{{\rm sgn}}
\nc{\aut}{{\rm Aut}}
\nc{\Hom}{{\rm Hom}}
\nc{\img}{{\rm Im}}
\DMO{\id}{Id}
\DMO{\supp}{supp}
\DMO{\KL}{KL}
\nc{\kld}[2]{\KL({#1}||{#2})}
\nc{\ren}[2]{D_2({#1}||{#2})}
\nc{\chisq}[2]{\chi^2({#1}||{#2})}
\nc{\tvd}[2]{D_{\mathsf{TV}}\left({#1}, {#2}\right)}
\nc{\hell}[2]{H^2({#1}, {#2})}
\DMO{\BSS}{BSS}
\DMO{\BES}{BES}
\DMO{\BGS}{BGS}
\DMO{\poly}{poly}
\nc{\indep}{\perp}
\DMO{\sink}{sink}
\DMO{\nosink}{nosink}
\nc{\sinks}{s^{\sink}}
\nc{\sinkobs}{o^{\sink}}
\nc{\fp}[1]{\MP_1({#1})}
\nc{\BO}{\mathbb{O}}
\nc{\BT}{\mathbb{T}}

\nc{\RR}{\mathbb{R}}
\nc{\NN}{\mathbb{N}}
\nc{\Gradient}{\nabla}
\DMO{\diag}{diag}
\nc{\norm}[1]{\left \lVert #1 \right \rVert}
\DMO*{\EE}{\mathbb{E}}
\nc{\LPN}{\mathsf{LPN}}
\DMO{\Ber}{Ber}
\nc{\Regress}{\mathtt{Regress}}
\nc{\LFC}{\mathtt{LearnFromCorr}}
\nc{\RegressAlg}{\mathtt{RegressAlg}}
\nc{\DrawTraj}{\mathtt{DrawTrajectory}}
\nc{\pizero}{{\pi_{\mathsf{zero}}}}
\nc{\Tred}{{T_{\mathsf{red}}}}
\nc{\epred}{{\epsilon_{\mathsf{red}}}}
\nc{\TriAlg}{\mathtt{GenerateTriangleLPN}}
\nc{\Alg}{\mathtt{Alg}}
\nc{\AffSample}{\mathtt{AffSample}}

\nc{\br}{\mathbf{r}}
\nc{\TV}{{\mathsf{TV}}}
\DMO{\Law}{Law}
\DMO{\Sym}{Sym}
\nc{\bu}{\mathbf{u}}
\nc{\Reg}{\mathtt{Reg}}
\nc{\Breg}{B_{\mathsf{Reg}}}
\DMO{\dc}{dc}

\DMO{\PR}{Pr}
\renewcommand{\Pr}{\PR}
\DMO*{\Prr}{Pr}
\nc{\E}{\mathbb{E}}
\nc{\ra}{\rightarrow}

\nc{\fq}{\mathfrak{q}}
\nc{\laplace}{\Delta}
\nc{\pavg}{\overline p}
\nc{\CPI}{C_{\mathsf{PI}}}
\nc{\convolve}{\star}
\nc{\normal}{\mathcal{N}}
\nc{\phat}{\widehat p}
\nc{\tstar}{t^\star}
\nc{\dd}{\mathrm{d}}
\nc{\nutil}{\widetilde \nu}

\nc{\pstar}{p^\star}
\nc{\ind}[1]{^{\footnotesize (#1)}}
\nc{\cL}{\mathcal{L}}
\nc{\qtil}{\widetilde{q}}
\nc{\indic}{\mathbbm{1}}
\nc{\pbar}{\ol p}
\nc{\qbar}{\ol q}
\nc{\qstar}{q^\star}
\nc{\epsscore}{\epsilon_{\mathsf{score}}}

\nc{\khat}{\hat \kappa}
\nc{\Zhat}{\hat Z}

\newcommand{\op}{\mathsf{op}}
\newcommand{\Lip}{\mathsf{Lip}}

\nc{\epfinal}{\epsilon_{\mathsf{final}}}
\nc{\cS}{\mathcal{S}}
\nc{\xtil}{\widetilde x}
\nc{\ztil}{\widetilde z}
\nc{\twist}[1]{p_{#1}}
\nc{\htwist}[1]{\phat_{#1}}
\nc{\Cnorm}{C_{\mathsf{norm}}}
\nc{\Clip}{C_{\mathsf{lip}}}
\nc{\Ckl}{C_{\mathsf{KL}}}
\nc{\WW}{\mathsf{W}}
\DMO{\rank}{rank}
\nc{\rr}{\mathbf{r}}
\nc{\NP}{\mathrm{NP}}
\nc{\BPP}{\mathrm{BPP}}
\crefname{appendix}{Appendix}{Appendices}

\AtBeginEnvironment{appendices}{%
  \crefalias{section}{appendix}%
}

\crefname{assumption}{Assumption}{Assumptions}

\title{The tractability landscape of diffusion alignment: regularization, rewards, and computational primitives}

\ifneurips
\author{%
  David S.~Hippocampus\thanks{Use footnote for providing further information
    about author (webpage, alternative address)---\emph{not} for acknowledging
    funding agencies.} \\
  Department of Computer Science\\
  Cranberry-Lemon University\\
  Pittsburgh, PA 15213 \\
  \texttt{hippo@cs.cranberry-lemon.edu} \\
}
\fi 

\ifarxiv 
\author{}
\date{}
\renewcommand{\citet}[1]{\cite{#1}}
\renewcommand{\citep}[1]{\cite{#1}}
\fi

\begin{document}

\maketitle

\ifarxiv 
\vspace{-3em}
\begin{center}
\large
\setlength{\tabcolsep}{20pt}
\begin{tabular}{ccc}
\makecell{Ankur Moitra \\ \small{\texttt{moitra@mit.edu}}}
&
\makecell{Andrej Risteski \\ \small{\texttt{aristesk@andrew.cmu.edu}}}
&
\makecell{Dhruv Rohatgi \\ \small{\texttt{drohatgi@mit.edu}}}
\end{tabular}
\end{center}
\vspace{1em}
\fi

\begin{abstract}
Inference-time reward alignment asks how to turn a pre-trained diffusion model with base law $p$ into a sampler that favors a reward $\rr$ while remaining close to $p$. Since there is no canonical distributional distance for this closeness constraint, different choices lead to different ``reward-aligned'' laws and, just as importantly, different algorithmic problems. We develop a primitive-based approach to reward alignment: rather than assuming arbitrary reward-aligned laws can be sampled, we ask which simple algorithmic primitives suffice to implement alignment for non-trivial reward classes. If closeness is measured in KL distance, the target law is $q(x) \propto p(x) \exp(\lambda^{-1}\rr(x))$. For this setting, we show that linear exponential tilts of the form $q(x)\propto p(x)\exp(\langle \theta, x \rangle)$--- which according to recent work \citep{MoitraRisteskiRohatgi2026} can be efficiently sampled from --- are a sufficient primitive for aligning to a very broad class of convex low-dimensional rewards. If closeness is measured in Wasserstein distance, the corresponding primitive is a proximal transport oracle: given $x$, solve $\mbox{argmax}_y \left\{ \rr(y)- \lambda c(x,y) \right\}$. This oracle can be efficiently implemented for concave or low-dimensional Lipschitz rewards \(\rr(x)=f(Ax)\). Together, these results illustrate that the choice of distribution distance for alignment affects the computational primitive and the tractable reward class.
\end{abstract}


\section{Introduction}

Inference-time algorithms use a pre-trained generative model as a subroutine inside a procedure for solving downstream tasks. With diffusion models, the procedure often involves \emph{steering} or \emph{aligning} the base model toward generations that satisfy a preference, constraint, measurement, or scientific design objective. Example tasks include posterior sampling in inverse problems such as inpainting and super-resolution~\citep{yu2018generative}, conditional or meta-generation~\citep{welleck2024metageneration}, and steering toward samples with desired scientific properties~\citep{geffner2025proteina}. These settings share a common qualitative goal: starting from a base distribution $p$ learned by a pre-trained diffusion model, and some reward function $\rr$, we want to modify the sampling procedure, at inference time, to favor high-reward samples---while remaining close to $p$.

The natural way to formalize this qualitative goal is to sample from the solution to a regularized variational objective:
\[
    \argmax_{q} \; \EE_{x\sim q}[\rr(x)] - \lambda D(q,p),
\]
where $D$ is a distributional distance or divergence. 
However, this formulation makes apparent a basic ambiguity: there is often no canonical choice of $D$. For example, if $D$ is chosen to be KL divergence, the optimizer is the exponentially tilted law
\[
    p^\star(x) \propto p(x)\exp(\lambda^{-1} \rr(x)).
\]
If $D$ is chosen to be a Wasserstein distance, the optimizer is instead described through a transport problem. These two choices are not simply different regularizers for the same computational task. They induce different distributions (see \cref{fig:kl-vs-wasserstein})---and the computational tractability of the resulting task could potentially change as well. Thus, in order to understand when reward alignment can be performed efficiently, it is not enough to specify the reward and the base model; one must also understand the geometry in which closeness to the base model is being enforced.

\begin{figure}[t]
    \centering
    \includegraphics[width=0.9\linewidth]{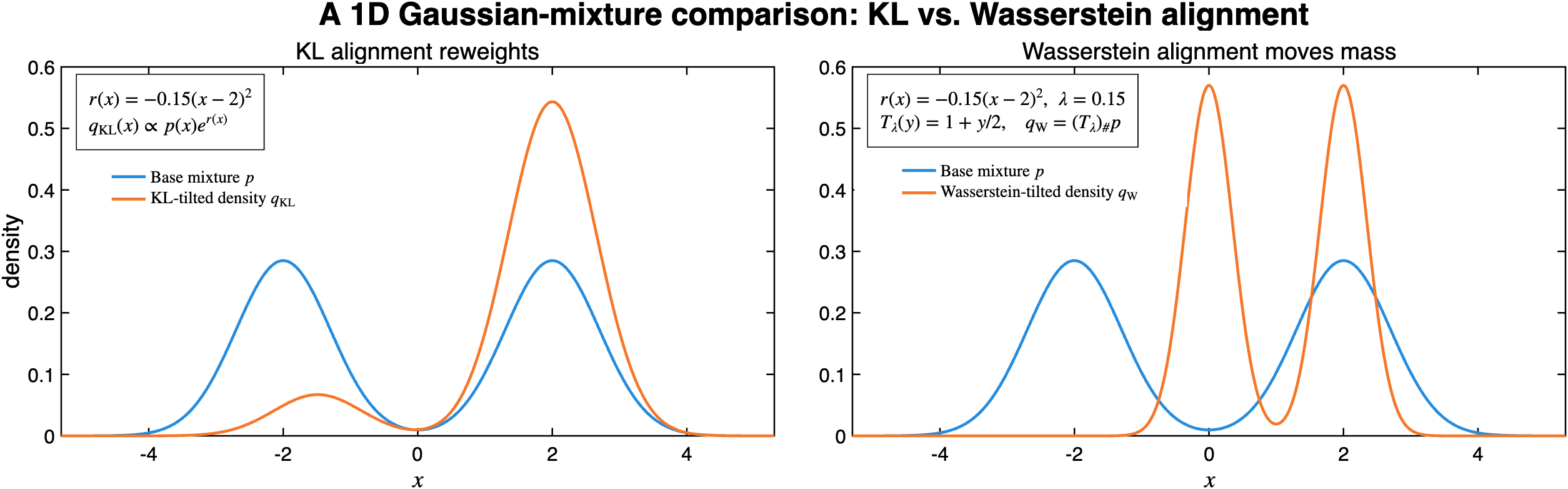}
    \caption{
    Illustration of KL versus Wasserstein alignment on a one-dimensional Gaussian mixture.
    The base distribution is $p=\frac12\mathcal N(-2,0.7^2)+\frac12\mathcal N(2,0.7^2)$ 
    and the reward is \(r(x)=-0.15(x-2)^2\).
    In the KL geometry, the aligned law
    \(q_{\mathrm{KL}}(x)\propto p(x)e^{r(x)}\)
    reweights the two modes.
    In the Wasserstein geometry, with quadratic transport cost and \(\lambda=0.15\), the proximal map is
    \(T_\lambda(y)=1+y/2\), so the aligned law
    \(q_{\mathrm W}=(T_\lambda)_\#p\)
    moves mass toward the high-reward region.
    }
    \label{fig:kl-vs-wasserstein}
\end{figure}

The main conceptual contribution of our paper is a primitive-based view of inference-time reward alignment. Namely, we identify elementary algorithmic primitives that allow us to characterize a broad class of rewards for which the aligned law can be provably, efficiently sampled from. 

\begin{itemize}[leftmargin=*]
\item \textbf{KL alignment using linear tilt oracles:}
We first study this question for KL alignment.     Our main positive result shows that if the reward is \emph{convex and low-dimensional}, namely
    \[
        \rr(x)=f(Ax), \qquad A\in\RR^{k\times d}, \qquad k\ll d,
    \]
with $f$ convex and sufficiently regular on the compact set containing $A\supp(p)$, then we can design an
    algorithm whose running time is exponential in the intrinsic dimension $k$ but polynomial in
    the ambient dimension $d$. Both assumptions (convexity and low-dimensionality) are provably necessary \citep{MoitraRisteskiRohatgi2026}.

The key idea is to build on the computational primitive of sampling from \emph{linear tilts} (\cref{def:linear-tilt}). This primitive can be implemented efficiently \citep{holderrieth2025glass,MoitraRisteskiRohatgi2026}, and it enables efficient KL alignment for any log-sum-exp reward function (with not too many summands):
\[\tilde{\rr}(x) = \log \sum_{i=1}^m w_i \exp(\langle v_i, x\rangle).\]
Unfortunately, it is unclear whether any convex $f$ can be approximated by a log-sum-exp function to arbitrary accuracy. However, we show how to construct a log-sum-exp function $G$ that is a coarse \emph{upper envelope} for $f$. The number of summands only scales with the intrinsic dimension $k$; thus, we can efficiently sample from $q(x) \propto p(x) \exp(G(Ax))$, and correct $q$ to the target distribution via rejection sampling. 

This result greatly generalizes the work of \citep{MoitraRisteskiRohatgi2026}, who handle the special case where $f$ is a positive-definite quadratic.

\item \textbf{Wasserstein alignment using proximal transport oracles}:     We then turn to Wasserstein alignment. In this geometry, the aligned distribution is not obtained by
    reweighting the base distribution. Instead, the solution to the variational alignment problem involves drawing $x\sim p$ and moving it
    to a nearby point $y$ with larger reward, while paying a transport cost for the displacement.
    The corresponding primitive is therefore a proximal transport oracle: 
    \[
        T_\lambda(x) \in \argmax_y \left\{\rr(y)-\lambda c(x,y)\right\}.
    \]
    Thus Wasserstein alignment reduces to the algorithmic problem of solving the reward-minus-cost
    maximization problem. For \emph{quadratic transport} costs, this is a proximal optimization problem,
    and it is efficiently solvable for \emph{concave} rewards $\rr$ or low-dimensional Lipschitz rewards \(\rr(x)=f(Ax)\) for $A\in\RR^{k\times d}$ with $k\ll d$.

\end{itemize}

Together, these results show that the choice of distributional distance changes the computational tractability landscape, as well as the relevant computational primitives. In KL geometry, alignment is a density-reweighting problem. Convex low-dimensional rewards
are tractable because their exponentials can be represented, via log-Laplace envelopes, as
positive mixtures of linear exponential tilts. In Wasserstein geometry, alignment is a transport problem. Concave rewards are tractable because reward-minus-quadratic-cost optimization becomes convex.
Low-dimensional rewards \(\rr(x)=f(Ax)\) lead to a different kind of tractability:
the Wasserstein proximal problem reduces to a global search over the low-dimensional coordinates
\(Ax\), and therefore can be solved in time exponential only in \(\rank(A)\); thus low-dimensionality
helps both geometries, but through different mechanisms.

This gives a particularly sharp contrast for concave rewards. Recent work by
\citet{MoitraRisteskiRohatgi2026} shows that KL alignment can be computationally intractable even
for a rank-one negative quadratic reward.\footnote{Specifically, we cannot sample from the tilted distribution even approximately unless $\mathrm{NP} \subseteq \mathrm{BPP}$---which is widely considered unlikely, as it would imply $\mathrm{NP}$-hard problems can be solved by randomized polynomial-time algorithms.} Under Wasserstein geometry, the same class of rewards is
tractable: the proximal transport problem is convex, and the corresponding aligned distribution can be sampled
by pushing forward base samples through the proximal map.

\subsection{Related work}

A large body of recent work studies inference-time steering of pretrained diffusion models, motivated by applications such as inverse problems, conditional generation, and preference optimization. Classifier guidance and related mechanisms were popularized as practical methods for conditional image generation~\citep{dhariwal2021diffusion}, and similar ideas have since been used for reward-guided sampling, posterior inference, and alignment. Several works have begun to analyze when steering can be provably performed via KL alignment: \citet{gupta2024diffusion} show worst-case intractability for diffusion posterior sampling; \citet{bruna2024provable} study posterior sampling with denoising oracles via tilted transport; \citet{chidambaram2024does} analyze guidance in simple mixture models and show that it need not sample from the intended tilt; \citet{karan2025reguidance} propose a wrapper for hard inverse problems; and \citet{parulekar2025efficient,xun2025posterior} give positive results under alternative approximation notions or structural assumptions. Our work is complementary to this line: rather than focusing only on posterior sampling or a specific guidance heuristic, we ask how the choice of distributional geometry changes which primitives are required for alignment and which reward classes are computationally tractable.

Closest to our work is
\citet{MoitraRisteskiRohatgi2026}, who give a fine-grained analysis of this problem for KL alignment with quadratic
rewards. They show that with linear rewards, the tilt can be sampled from efficiently using a diffusion oracle (an observation also implicit in the work of \cite{holderrieth2025glass}),
use the Hubbard--Stratonovich transform to handle low-rank positive-semidefinite quadratic rewards,
and prove hardness for rank-one negative-semidefinite quadratic rewards. Our KL results build on
their linear-tilt primitive, but show that it can be used much more broadly: convex low-dimensional
rewards can be handled by constructing finite log-Laplace envelopes and reducing the resulting
proposal distribution to a mixture of linear tilts.

Our Wasserstein results are related to the literature on Wasserstein distributionally robust
optimization and optimal-transport model risk. In this literature, worst-case expectations over
Wasserstein balls admit dual formulations involving a Lagrange multiplier and a pointwise
supremum of the form \(\sup_x\{\ell(x)-\lambda c(x,y)\}\)
\citep{BlanchetMurthy2019,GaoKleywegt2023,ZhangYangGao2025}. We use this same dual perspective,
but in a generative-model setting: the pointwise maximization is an algorithmic primitive for sampling from the Wasserstein tilt.

\section{Preliminaries}\label{sec:preliminaries}

\subsection{General preliminaries}

\paragraph{Basic notation.}
For a probability measure $P$ on $\RR^d$, we write $X\sim P$ for a random variable with law $P$.
When $P$ has a density with respect to an underlying base measure $\mu$, we denote the density by $p$. 
In the discrete case, $\mu$ should be interpreted as counting measure. 
For a measurable map
$T:\RR^d\to\RR^d$, we write $T_\#P$ for the pushforward of $P$ by $T$, i.e. the law of $T(X)$
when $X\sim P$. For $R>0$, let $\CB_{d,2}(R) := \{x\in\RR^d:\norm{x}_2\leq R\}.$ 
We write $\supp(P)$ for the support of $P$. Given $A\in\RR^{k\times d}$, define $A\supp(P) := \{Ax: x \in \supp(P)\}$.

For two probability measures $P,Q$, we define total variation distance as
\[
    \TV(P,Q)
    :=
    \sup_{S} |P(S)-Q(S)|
    =
    \frac12\int \left|p(x)-q(x)\right|\,\dd \mu(x),
\]
whenever the densities exist. We write $\WW_2(P,Q)$ for Wasserstein-$2$ distance:
\[
    \WW_2^2(P,Q)
    :=
    \inf_{\gamma\in\Pi(P,Q)}
    \EE_{(X,Y)\sim \gamma}\left[\norm{X-Y}_2^2\right],
\]
where $\Pi(P,Q)$ is the set of couplings of $P$ and $Q$.

\paragraph{Diffusion oracle access.}
Our base object is a distribution $P$ on $\RR^d$, which should be thought of as the law sampled by
a pre-trained diffusion model. Following the standard score-oracle abstraction, we assume access to
the scores of Gaussian noised versions of $P$.

\begin{definition}[Noised distribution]\label{def:noised-distribution}
For $\sigma\in[0,1]$, define $P_\sigma$ to be the law of
\[
    X_\sigma := \sqrt{1-\sigma^2}\,X+\sigma Z,
    \qquad
    X\sim P,\quad Z\sim \normal(0,I_d),
\]
with $X$ and $Z$ independent. We write $p_\sigma$ for the density of $P_\sigma$ when it exists.
\end{definition}

\begin{assumption}[Bounded support]\label{ass:bounded-support}
There is a known parameter $\Cnorm\geq 1$ such that $\supp(P)\subseteq \CB_{d,2}(\Cnorm).$
\end{assumption}

\begin{assumption}[Diffusion score oracle]\label{ass:score-oracle}
For every $\sigma\in(0,1)$ and every $x\in\RR^d$, we can query $s_\sigma(x) := \nabla \log p_\sigma(x).$
\end{assumption}

The bounded-support assumption ensures that all exponential tilts considered below are well-defined
for bounded reward parameters. The score-oracle assumption is the idealized form of access provided
by a pre-trained diffusion model---and widely assumed in prior related work \citep{ chen2023sampling,bruna2024provable,MoitraRisteskiRohatgi2026}. We do not investigate the effect of \emph{errors} in the score oracle; while they can be handled in the vanilla sampling setting (i.e. $\rr = 0$) \citep{chen2023sampling}, their effect on reward alignment remains largely an open and subtle question, as noted in \citep{MoitraRisteskiRohatgi2026}. 

\paragraph{Reward alignment.}
Given a reward function $\rr:\RR^d\to\RR$, the general reward alignment problem is to sample from a
distribution $Q$ that achieves high expected reward while remaining close to the base distribution
$P$. Formally, for a domain $\mathcal X \subseteq \RR^d$ and a divergence or distance $D$, we wish to sample from a solution to the following variational optimization problem:
\begin{equation}
    \argmax_{Q\in\Delta(\mathcal X)}
    \left\{
        \EE_{X\sim Q}[\rr(X)]
        -
        \lambda D(Q,P)
    \right\}.
    \label{eq:general-distance-tilt-problem}
\end{equation}

We study two choices for $D$: KL divergence and Wasserstein distance. 

\subsection{KL alignment and linear exponential tilts}

The following variational identity is standard.\footnote{For completeness, the proof is included in \cref{a:prelims}}

\begin{lemma}[KL alignment gives exponential weights]\label{lem:kl-exponential-tilt}
Let $\lambda>0$ and assume $Z_{\rr,\lambda}
    :=
    \EE_{X\sim P}\left[\exp(\rr(X)/\lambda)\right]
    <\infty.$
Then the unique optimizer of the KL alignment problem
\[
    \sup_{Q\ll P}
    \left\{
        \EE_{X\sim Q}[\rr(X)]-\lambda \KL(Q\|P)
    \right\}
\]
is the exponentially tilted distribution $P_{\rr,\lambda}$ defined by $P_{\rr,\lambda}(\dd x)
    :=
    \frac{\exp(\rr(x)/\lambda)}{Z_{\rr,\lambda}}P(\dd x).$
Equivalently, if $P$ has density $p$, then $p_{\rr,\lambda}(x)
    \propto
  p(x)\exp(\rr(x)/\lambda).$
\end{lemma}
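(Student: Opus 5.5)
The plan is the Gibbs variational principle (Donsker--Varadhan): rewrite the objective as a constant minus a KL divergence to $P_{\rr,\lambda}$.

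\textbf{Step 1: $P_{\rr,\lambda}$ is well defined.} Since $0<Z_{\rr,\lambda}<\infty$ (positivity because $\exp>0$), $P_{\rr,\lambda}$ is a probability measure equivalent to $P$. Its density is
\[
\frac{\dd P_{\rr,\lambda}}{\dd P}=\frac{e^{\rr/\lambda}}{Z_{\rr,\lambda}}.
\]

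\textbf{Step 2: the key identity.} Fix $Q\ll P$ with $\KL(Q\|P)<\infty$; if $\KL(Q\|P)=\infty$ the objective is $-\infty$, or undefined, and such $Q$ cannot be optimal. Write $\frac{\dd Q}{\dd P}=\frac{\dd Q}{\dd P_{\rr,\lambda}}\cdot\frac{\dd P_{\rr,\lambda}}{\dd P}$ and take $\log$ under $Q$. This gives
\[
\KL(Q\|P)=\KL(Q\|P_{\rr,\lambda})+\lambda^{-1}\EE_Q[\rr]-\log Z_{\rr,\lambda}.
\]
Rearranging,
\[
\EE_Q[\rr]-\lambda\KL(Q\|P)=\lambda\log Z_{\rr,\lambda}-\lambda\KL(Q\|P_{\rr,\lambda}).
\]

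\textbf{Step 3: conclude.} $\KL(Q\|P_{\rr,\lambda})\ge 0$, with equality iff $Q=P_{\rr,\lambda}$ (Gibbs' inequality via strict convexity of $t\log t$). So the supremum equals $\lambda\log Z_{\rr,\lambda}$ and is attained uniquely at $P_{\rr,\lambda}$. The density statement follows by multiplying $\dd P_{\rr,\lambda}/\dd P$ by $p$.

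\textbf{Main obstacle: integrability in Step 2.} Splitting $\EE_Q[\log\frac{\dd Q}{\dd P}]$ into two terms requires $\EE_Q[\rr]$ to be well defined. I would first show $\EE_Q[\rr^+]<\infty$ using Young's inequality $ab\le e^{a}+b\log b-b$. Setting $a=\rr^+/\lambda$ and $b=\dd Q/\dd P$ gives
\[
\EE_Q[\rr^+/\lambda]\le \EE_P[e^{\rr^+/\lambda}]+\KL(Q\|P)+\text{const},
\]
and $\EE_P[e^{\rr^+/\lambda}]\le 1+Z_{\rr,\lambda}<\infty$.

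If instead $\EE_Q[\rr^-]=\infty$, the objective is $-\infty$ and $Q$ is not optimal. Otherwise every term in the identity is finite and the manipulation is justified. Checking $P_{\rr,\lambda}$ itself: $\log\frac{\dd P_{\rr,\lambda}}{\dd P}=\rr/\lambda-\log Z_{\rr,\lambda}$, and the same Young bound makes its objective value finite and equal to $\lambda\log Z_{\rr,\lambda}$.
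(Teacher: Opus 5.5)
Your core argument is the same as the paper's: the identity $\EE_Q[\rr]-\lambda\KL(Q\|P)=\lambda\log Z_{\rr,\lambda}-\lambda\KL(Q\|P_{\rr,\lambda})$ followed by Gibbs' inequality. The paper states this in two lines with no integrability discussion. Your Young-inequality bound showing $\EE_Q[\rr^+]<\infty$ whenever $\KL(Q\|P)<\infty$ is correct, and it is a real addition.

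Your last sentence, however, is circular. Applying that bound with $Q=P_{\rr,\lambda}$ presupposes $\KL(P_{\rr,\lambda}\|P)<\infty$. Since $\KL(P_{\rr,\lambda}\|P)=\lambda^{-1}\EE_{P_{\rr,\lambda}}[\rr]-\log Z_{\rr,\lambda}$, this is equivalent to $\EE_P[\rr^+e^{\rr/\lambda}]<\infty$, and that does not follow from $Z_{\rr,\lambda}<\infty$. For example, take $\lambda=1$, $\rr(x)=x$, and $P$ with density proportional to $e^{-x}x^{-2}$ on $[1,\infty)$. Then $Z_{\rr,1}<\infty$, but $P_{\rr,1}$ has density $x^{-2}$ on $[1,\infty)$ and infinite mean. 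So the objective at the claimed optimizer is $\infty-\infty$. Under your own convention (infinite-KL $Q$ ``cannot be optimal''), no admissible $Q$ attains $\log Z_{\rr,1}$, and the uniqueness/attainment conclusion of Step 3 fails.

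The paper's proof silently has the same issue; it is really a missing hypothesis or convention in the statement. You can close it in either of two ways. One is to assume $\EE_P[\rr^+e^{\rr/\lambda}]<\infty$, which is automatic in the paper's setting because $\rr=f(Ax)$ is bounded on the bounded support of $P$. The other is to read the objective as the single integral $\lambda\EE_Q[\rr/\lambda-\log\frac{\dd Q}{\dd P}]$. Its integrand equals $\log Z_{\rr,\lambda}-\log\frac{\dd Q}{\dd P_{\rr,\lambda}}$ $Q$-a.s., and its positive part is $Q$-integrable. Hence it is always well defined, equals $\lambda\log Z_{\rr,\lambda}-\lambda\KL(Q\|P_{\rr,\lambda})$ for every $Q\ll P$, and is identically $\lambda\log Z_{\rr,\lambda}$ at $P_{\rr,\lambda}$.
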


The most basic example of an exponentially tilted distribution is a linear exponential tilt, which for conciseness we will also refer to as a \emph{linear tilt}.

\begin{definition}[Linear exponential tilt]\label{def:linear-tilt}
For $v\in\RR^d$, define \(
    Z_P(v)
    :=
    \EE_{X\sim P}\left[\exp(\langle v,X\rangle)\right],\)
and whenever $Z_P(v)<\infty$, define the linear exponential tilt of $P$ by $P_v(\dd x)
    :=
    \frac{\exp(\langle v,x\rangle)}{Z_P(v)}P(\dd x).$
Equivalently, if $P$ has density $p$, then $P_v$ has density $p(\cdot;v)$ defined by
$p(x;v)
    \propto
    p(x)\exp(\langle v,x\rangle).$
\end{definition}

Under \cref{ass:bounded-support}, $Z_P(v)<\infty$ for every $v\in\RR^d$. Our algorithmic results for KL tilting use the following primitive, which packages the linear-tilt
sampler and normalizer-estimation routines available from recent work by \citep{MoitraRisteskiRohatgi2026}.

\begin{definition}[Linear-tilt primitive, \citep{MoitraRisteskiRohatgi2026}]\label{def:linear-tilt-primitive}
We say that $P$ admits an efficient linear-tilt primitive if, for any $v\in\RR^d$, the following two
tasks can be performed in time polynomial in $d$, $\Cnorm$, $\norm{v}_2$, and the relevant accuracy
parameters:
\begin{enumerate}[leftmargin=*]
    \item \textbf{Sampling.} Given $\epsilon>0$, output a sample from a distribution
    $\widehat P_v$ satisfying $\supp(\widehat P_v) \subseteq \CB_{d,2}(\Cnorm)$ and $\WW_2(\widehat P_v,P_v)\leq \epsilon.$
    \item \textbf{Normalizer estimation.} Given $\eta,\delta\in(0,1)$, output
    $\widehat Z_P(v)$ such that, with probability at least $1-\delta$,
    \[
        \widehat Z_P(v)
        \in
        [(1-\eta)Z_P(v),(1+\eta)Z_P(v)].
    \]
\end{enumerate}
\end{definition}

For constructive approximation results, we will assume first-order access to the low-dimensional
function $f$.

\begin{definition}[First-order oracle for convex rewards]\label{def:first-order-oracle}
Let $K\subseteq\RR^k$ and let $f$ be convex on a neighborhood of $K$. A first-order oracle for
$f$ is a procedure which, given $u\in K$, returns
\[
    \left(f(u),g(u)\right),
    \qquad
    g(u)\in \partial f(u).
\]
If $f$ is differentiable, we take $g(u)=\nabla f(u)$.
\end{definition}

\subsection{Wasserstein alignment and proximal transport}

We next recall the transport geometry used for Wasserstein alignment. Let $c:\RR^d\times\RR^d\to
\RR_{\geq 0}$ be a transportation cost. For probability measures $P,Q$, define
\[
    W_c(Q,P)
    :=
    \inf_{\gamma\in\Pi(Q,P)}
    \EE_{(Y,X)\sim\gamma}\left[c(X,Y)\right].
\]
For the quadratic cost $c(x,y)=\norm{x-y}_2^2$, this is $\WW_2^2(Q,P)$.
As we will show later, it turns out that the following primitive is central to the Wasserstein alignment problem.

\begin{definition}[Proximal transport oracle]\label{def:proximal-transport-oracle}
Given a domain $\mathcal X$, reward $\rr$, cost $c$, and parameter $\lambda\geq 0$, a proximal transport map is any
measurable $T_\lambda:\RR^d \to \RR^d$ satisfying, for all $y \in \RR^d$,
\[
    T_\lambda(y)
    \in
    \argmax_{x\in\mathcal X}
    \left\{
        \rr(x)-\lambda c(x,y)
    \right\}.
\]
\end{definition}

\section{Low-dimensional convex KL alignment via linear exponential tilts}
\label{sec:lowrank-convex}

In this section, we prove that KL alignment with low-dimensional convex rewards can be solved efficiently, using
linear exponential tilting as a key primitive. Throughout the section, we set the inverse-temperature parameter in \cref{eq:general-distance-tilt-problem} to $\lambda=1$. Thus,
for a reward of the form $\rr(x)=f(Ax), A\in\RR^{k\times d}$, our target distribution is
\begin{equation}
\label{eq:convex-kl-target}
    Q_f(dx)
    :=
    \frac{p(x)\exp(f(Ax))}{Z_f}\,\mu(dx),
    \qquad
    Z_f
    :=
    \int p(x)\exp(f(Ax))\,\mu(dx).
\end{equation}
An inverse-temperature parameter in the KL objective can be absorbed into the reward $f$, so this
normalization is without loss of generality.
The computational primitive the algorithms will use is approximate sampling from linear exponential tilts---as formalized in \cref{def:linear-tilt-primitive}. 

The main result of this section is a sampler for $Q_f$ for any convex $f$, whose
runtime is polynomial in the ambient dimension $d$ and exponential only in the intrinsic dimension
$k$.

\begin{theorem}[Low-dimensional convex KL alignment from linear tilts]
\label{thm:lowrank-convex-main}
Assume \cref{ass:bounded-support,ass:score-oracle}. Let
\[
    \rr(x)=f(Ax),
    \qquad
    A\in\RR^{k\times d},
    \qquad
    R:=\norm{A}_{\op}\Cnorm.
\]
Suppose that $f$ is convex and $L$-Lipschitz on an open convex neighborhood of
$\CB_{k,2}(R)$.
Let $Q_f$ be the KL tilt defined in \eqref{eq:convex-kl-target}. Define
\[
    M:=\left(1+4LR\right)^k.
\]
Then for every $\epsilon\in(0,1)$ and $\delta\in(0,1)$, there is a randomized algorithm which,
given diffusion-score access to $P$ and first-order oracle access to $f$, runs in time at most
\[
    \operatorname{poly}\!\left(
        d,\,
        k,\,
        \Cnorm,\,
        L,\,
        \norm{A}_{\op},\,
        \epsilon^{-1},\,
        \log(1/\delta),\,
        M
    \right)
\]
and outputs a sample whose law $\widehat Q$ satisfies $\WW_2(\widehat Q,Q_f)\leq \epsilon$ with probability at least $1-\delta$.
\end{theorem}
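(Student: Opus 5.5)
The plan is to build a log-sum-exp upper envelope of $f$ on $K:=\CB_{k,2}(R)$, sample the proposal law it induces via linear tilts, and correct to $Q_f$ by rejection sampling.

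\textbf{Envelope.} Fix a grid $N\subset K$ that is a $\rho$-net of $K$ with $\rho=1/(2L)$. Its size is $|N|\le(1+2R/\rho)^k=(1+4LR)^k=M$, which matches the parameter in the statement. For each $u_i\in N$, query the first-order oracle to get $(f(u_i),g_i)$ with $\norm{g_i}_2\le L$. Define the tangent functions
\[
\ell_i(u)=f(u_i)+\langle g_i,u-u_i\rangle .
\]
By convexity, $\ell_i\le f$ everywhere. Conversely, for $u\in K$ choose $u_i$ with $\norm{u-u_i}\le\rho$; then $f(u)\le f(u_i)+L\rho$ and $\ell_i(u)\ge f(u_i)-L\rho$, so $f(u)-\ell_i(u)\le 2L\rho=1$. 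Set
\[
G(u):=1+\log\sum_i \exp(\ell_i(u)).
\]
This gives the sandwich
\[
f(u)\le 1+\max_i\ell_i(u)\le G(u)\le 1+\log M+f(u),
\]
so $\exp(f-G)\in[e^{-1}/M,\,1]$ on $K$, and $K\supseteq A\supp(P)$.

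\textbf{Proposal as a mixture of linear tilts.} We have
\[
p(x)e^{G(Ax)}=e\sum_i e^{f(u_i)-\langle g_i,u_i\rangle}\,p(x)\,e^{\langle A^\top g_i,x\rangle}.
\]
So the proposal $Q_G$ is a mixture of the linear tilts $P_{v_i}$ with $v_i=A^\top g_i$ and $\norm{v_i}\le L\norm{A}_{\op}$. The mixture weights are proportional to $w_i=e^{f(u_i)-\langle g_i,u_i\rangle}Z_P(v_i)$. Estimate each $Z_P(v_i)$ with the normalizer oracle of \cref{def:linear-tilt-primitive} to relative accuracy $\eta$, with failure probability $\delta/(2M)$ for each. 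This yields weights within a factor $(1\pm\eta)$, hence a mixture within TV $O(\eta)$ of $Q_G$. To sample, pick $i$ from the estimated weights, draw $x$ from $\widehat P_{v_i}$, and accept with probability $\exp(f(Ax)-G(Ax))$. Both quantities are computable: $f(Ax)$ from the oracle (note $Ax\in K$ because the approximate samples stay in $\CB_{d,2}(\Cnorm)$), and $G$ from the stored list of $(f(u_i),g_i)$. With exact ingredients the accepted law is exactly $Q_f$. The acceptance probability is at least $e^{-1}/M$, so $O(M\log(1/\epsilon))$ rounds suffice, with a fallback output after too many rejections.

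\textbf{Error analysis (main obstacle).} The step I expect to be hardest is propagating the $\WW_2$ error of the linear-tilt sampler through the rejection step. Rejection reweights by a likelihood ratio, and a $\WW_2$-close law can have a very different ratio pointwise. I would handle this with a coupling argument. Couple $\widehat P_{v_i}$ with $P_{v_i}$ so that $\EE\norm{\hat X-X}^2\le\epsilon'^2$. The acceptance function $h(x)=\exp(f(Ax)-G(Ax))$ is Lipschitz with constant $O(L\norm{A}_{\op})$, since $f$ is $L$-Lipschitz, $G$ is a log-sum-exp of affine maps with slopes $\le L$, and $h\le 1$. Use a shared uniform variable for the accept decision in the two runs. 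The accept decisions then disagree with probability $O(L\norm{A}_{\op}\epsilon')$.

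Conditional on both runs accepting, the outputs are within $\norm{\hat X-X}$. On the disagreement event and on the fallback event, the transport cost is bounded by the diameter $2\Cnorm$. Combining these over the geometric number of rounds, and dividing by the acceptance probability $\ge 1/(eM)$, the $\WW_2$ distance is at most
\[
\mathrm{poly}(M,L,\norm{A}_{\op},\Cnorm)\cdot\bigl(\sqrt{\epsilon'}+\eta\bigr)+\Cnorm\,\epsilon'' ,
\]
where the TV-type terms are converted to $\WW_2$ using bounded support, giving a $\sqrt{\cdot}$ loss. Choosing $\epsilon',\eta,\epsilon''$ as inverse polynomials in these parameters and in $\epsilon^{-1}$ gives $\WW_2(\widehat Q,Q_f)\le\epsilon$. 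A union bound over the $M$ normalizer calls gives success probability $1-\delta$. The total runtime is $M$ oracle calls and normalizer estimates plus $O(M)$ expected tilt samples, each polynomial in $d,\Cnorm,L\norm{A}_{\op}$ and the accuracies, which is the claimed bound.
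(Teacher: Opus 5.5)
Your proof is correct. Its architecture matches the paper's:
\begin{itemize}
\item the same $\frac{1}{2L}$-net and tangent-plane envelope $G=1+\log\sum_i e^{\ell_i}$, with $f\le G\le f+1+\log m$;
\item the same identification of $Q_G$ as the mixture $\sum_i\pi_iP_{A^\top g_i}$;
\item the same acceptance function $a=\exp(f(A\cdot)-G(A\cdot))$, which is $2L\norm{A}_{\op}$-Lipschitz and at least $a_0=e^{-1}/m$ on $\CB_{d,2}(\Cnorm)$.
\end{itemize}

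The real difference is how you carry the sampler error through the rejection step. The paper works at the level of laws, in three stages.
\begin{itemize}
\item It first bounds $\WW_2(\widehat Q_G,Q_G)\le\epsilon_{\mathrm{lin}}+2\Cnorm\sqrt{\eta/(1-\eta)}$.
\item It then proves a stability lemma for the reweighting map $\mathcal R_a$ via Kantorovich duality: $\WW_1(\mathcal R_a(Q),\mathcal R_a(\widehat Q))\le\frac{1+2\Cnorm L_a}{a_0}\WW_1(Q,\widehat Q)$, converted to $\WW_2$ using bounded support.
\item It handles the finite number of trials separately: acceptance is at least $a_0/2$ under $\widehat Q_G$, and the output is a TV-small mixture with the fallback law.
\end{itemize}

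You instead couple the actual run with an idealized, untruncated, exact-ingredient rejection sampler, round by round. The two runs share the index, sample and uniform randomness, and every desynchronization is charged to the diameter. This absorbs index error, component error and truncation into one argument and avoids duality.

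The phrase ``dividing by the acceptance probability'' needs a stopping-time argument to be rigorous. Let $T$ be the first round in which either run accepts. The event $\{T\ge t\}$ is determined by earlier rounds, and $\Pr[T\ge t]\le(1-a_0)^{t-1}$ because the ideal run accepts each round with probability $Z_f/Z_G\ge a_0$. This gives $\EE[\norm{\hat X_T-X_T}^2;\text{in sync}]\le\epsilon'^2/a_0$. It also gives $\Pr[\text{desync}]\le\beta/a_0$, where $\beta=\TV(\widehat\pi,\pi)+2L\norm{A}_{\op}\epsilon'$. Adding the truncation term yields $\WW_2^2\le \epsilon'^2/a_0+4\Cnorm^2\bigl(\beta/a_0+(1-a_0)^N\bigr)$.

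One small slip: after the TV-to-$\WW_2$ conversion, the index error contributes $\sqrt{\eta}$, not $\eta$. This is harmless because $\eta$ is chosen inverse-polynomially.

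The paper's route is more modular and gives explicit parameter settings. Yours stays closer to the algorithm's own randomness and achieves comparable polynomial dependence.
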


The full algorithm is specified\footnote{We note the last fallback step is only used to make the runtime deterministic.} as \cref{alg:lowrank-convex-tilt}. We provide the main ingredients of the proof; the complete proof is deferred to \cref{a:kl}.

\begin{algorithm}[t]
\caption{\textsc{LowRankConvexTiltSampler}: low-rank convex KL tilt}
\label{alg:lowrank-convex-tilt}
\begin{algorithmic}[1]
  \State \textbf{Input:} score oracle $(s_\sigma)_\sigma$ for $P$, matrix $A\in\RR^{k\times d}$, first-order oracle $\mathcal O_f$, parameters $L,R,\Cnorm,\epsilon,\delta$.
  \If{$L=0$}
    \State \Return $\textsc{LinTiltSampler}((s_\sigma)_\sigma,0,\epsilon,\Cnorm)$.
  \EndIf

  \State Set $h\gets 1/(2L)$ and construct an $h$-net $\{u^{(1)},\ldots,u^{(m)}\}\subseteq \CB_{k,2}(R)$, so $m\le (1+4LR)^k$.
  \For{$i=1,\ldots,m$}
    \State Query $(f_i,g_i)\gets \mathcal O_f(u^{(i)})$.
    \State Set $z_i\gets g_i$, $b_i\gets f_i-\langle g_i,u^{(i)}\rangle$, $w_i\gets e^{b_i}$, and $v_i\gets A^\top z_i$.
  \EndFor

  \State Define $G(u)\gets 1+\log\!\left(\sum_{i=1}^m w_i e^{\langle z_i,u\rangle}\right)$.
  \State Set $B\gets 1+\log m$, $a_0\gets e^{-B}$, and $L_a\gets 2L\norm{A}_{\op}$.
  \State Set
  \[
    \rho\gets
    \min\left\{
      \frac{\epsilon^2 a_0}{32\Cnorm(1+2\Cnorm L_a)},
      \frac{a_0}{4\max\{1,L_a\}}
    \right\},
    \quad
    \epsilon_{\rm lin}\gets \rho/2,
    \quad
    \eta\gets \min\left\{\frac12,\frac{\rho^2}{128\Cnorm^2}\right\}.
  \]
  \State Set $N_{\rm rej}\gets \left\lceil 2a_0^{-1}\log(16\Cnorm^2/\epsilon^2)\right\rceil$.

  \For{$i=1,\ldots,m$}
    \State Estimate $\widehat Z_i\gets
    \textsc{EstimateNormalization}((s_\sigma)_\sigma,v_i,\eta,\delta/m,\Cnorm)$.
  \EndFor
  \State Set $\widehat\pi_i\gets w_i\widehat Z_i/\sum_{j=1}^m w_j\widehat Z_j$ for each $i\in[m]$.

  \For{$t=1,\ldots,N_{\rm rej}$}
    \State Draw $\widehat I_t\sim \widehat\pi$ and
    $X_t\gets \textsc{LinTiltSampler}((s_\sigma)_\sigma,v_{\widehat I_t},\epsilon_{\rm lin},\Cnorm)$.
    \State Accept $X_t$ with probability $a(X_t):=\exp(f(AX_t)-G(AX_t))$.
    \If{$X_t$ is accepted}
      \State \Return $X_t$.
    \EndIf
  \EndFor

  \State \Return $\textsc{LinTiltSampler}((s_\sigma)_\sigma,0,\epsilon,\Cnorm)$.
\end{algorithmic}
\end{algorithm}


The first element is showing how to construct an upper envelope for convex $f$: 
\begin{lemma}[Log-sum-exp envelope for convex rewards]
\label{lem:lse-envelope-unit}
Let $f$ be convex and $L$-Lipschitz on an open convex neighborhood of $\CB_{k,2}(R)$. Let
$\{u^{(1)},\dots,u^{(m)}\}$ be an $h$-net of $\CB_{k,2}(R)$ with $h=\frac{1}{2L}.$ For each $i$, define function $\ell_i$ by
\[
    \ell_i(u)
    :=
    f(u^{(i)})+\langle g_i,u-u^{(i)}\rangle,
    \qquad
    g_i\in\partial f(u^{(i)}).
\]
Define
\[
    G(u):=
    1+\log\left(\sum_{i=1}^m \exp(\ell_i(u))\right).
\]
Then for all $u\in\CB_{k,2}(R)$,
\[
    f(u)\leq G(u)\leq f(u)+1+\log m.
\]
Moreover,
\[
    m\leq \left(1+4LR\right)^k.
\]
\end{lemma}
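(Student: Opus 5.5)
The plan is to prove the two inequalities separately, using the tangent lines $\ell_i$ together with the Lipschitz control coming from the net, and then to bound $m$ by a volume argument.

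\emph{Upper bound.} Since $f$ is convex on a neighborhood of $\CB_{k,2}(R)$, each $\ell_i$ is a supporting affine minorant, so $\ell_i(u)\le f(u)$ for every $u$ in that neighborhood. Hence
\[
    \sum_{i=1}^m \exp(\ell_i(u))\le m\exp(f(u)),
\]
and taking logarithms and adding $1$ gives $G(u)\le f(u)+1+\log m$.

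\emph{Lower bound.} Fix $u\in\CB_{k,2}(R)$ and choose $i$ with $\norm{u-u^{(i)}}_2\le h=1/(2L)$. Because $f$ is $L$-Lipschitz on an open neighborhood, every subgradient at an interior point satisfies $\norm{g_i}_2\le L$. To see this, take $w=u^{(i)}+t g_i/\norm{g_i}_2$ for small $t$, so that $t\norm{g_i}_2\le f(w)-f(u^{(i)})\le Lt$. Then
\[
    \ell_i(u)\ge f(u^{(i)})-L h\ge f(u)-2Lh=f(u)-1,
\]
where the last inequality uses $|f(u)-f(u^{(i)})|\le Lh$. Therefore
\[
    G(u)\ge 1+\ell_i(u)\ge f(u).
\]
This is exactly where the choice $h=1/(2L)$ and the additive constant $1$ in $G$ are needed. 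I expect the only delicate point to be the subgradient-norm bound: it requires $u^{(i)}$ to lie in the open neighborhood, and it does, since $u^{(i)}\in\CB_{k,2}(R)$.

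\emph{Cardinality.} I take the net to be a maximal $h$-separated subset of $\CB_{k,2}(R)$. Maximality makes it an $h$-net. Separation makes the balls of radius $h/2$ around the points disjoint, and they all lie inside $\CB_{k,2}(R+h/2)$. Comparing volumes,
\[
    m\le\left(\frac{R+h/2}{h/2}\right)^k=(1+2R/h)^k=(1+4LR)^k.
\]
The case $L=0$ is handled separately, as in \cref{alg:lowrank-convex-tilt}. There $f$ is constant, a single point suffices, and the bounds hold trivially.
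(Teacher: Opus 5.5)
Your proof is correct and follows the paper's argument essentially step for step: the tangent-plane minorants $\ell_i\le f$ give the upper bound (the paper routes this through $\phi=\max_i\ell_i$ and the softmax bound, which amounts to the same thing), and the nearest net point together with the subgradient-norm bound and $2Lh=1$ gives the lower bound. The one addition is the cardinality step. You explicitly take a maximal $h$-separated set and compare volumes, whereas the paper just cites ``the covering bound''; your version is the right way to state it, since $m\le(1+2R/h)^k$ does not hold for an arbitrary $h$-net.
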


Next, we observe that for any log-sum-exp function $G$, the tilted distribution $Q_G$ (\cref{eq:convex-kl-target}) is an explicit mixture of linear tilts.

\begin{lemma}
\label{lem:proposal-mixture-unit}
Let $z_1,\dots,z_m\in\RR^d$ and $w_1,\dots,w_m\geq 0$. Let $G(u)
    :=
    1+\log\left(
        \sum_{i=1}^m w_i\exp(\langle z_i,u\rangle)
    \right).$
Define
\[
    Q_G(dx)
    :=
    \frac{p(x)\exp(G(Ax))}{Z_G}\,\mu(dx).
\]
For each $i$, define
\[
    v_i:=A^\top z_i,
    \qquad
    Z_i:=Z_P(v_i)
    =
    \EE_{X\sim P}\exp(\langle v_i,X\rangle).
\]
Then, we have 
\[
    Q_G
    =
    \sum_{i=1}^m \pi_i P_{v_i}, \quad \mbox{where} \quad
    \pi_i
    =
    \frac{w_iZ_i}{\sum_{j=1}^m w_jZ_j}.
\]
\end{lemma}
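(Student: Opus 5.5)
The plan is a direct computation of the density of $Q_G$, expanding the exponential of the log-sum-exp. Note that $z_i \in \RR^k$ is implicitly intended, since $\langle z_i, Ax\rangle$ must make sense; the statement writes $\RR^d$, but the argument only uses that $\langle z_i, Ax\rangle = \langle A^\top z_i, x\rangle = \langle v_i, x\rangle$.

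First I would write
\[
    \exp(G(Ax)) = e\sum_{i=1}^m w_i \exp(\langle z_i, Ax\rangle) = e\sum_{i=1}^m w_i \exp(\langle v_i, x\rangle).
\]
Integrating against $p\,\dd\mu$ and using Tonelli's theorem (all terms are nonnegative) gives
\[
    Z_G = e\sum_{j=1}^m w_j Z_j .
\]
Under \cref{ass:bounded-support}, each $Z_j<\infty$, so this is finite. It is also positive provided some $w_j>0$. The degenerate case where all $w_j=0$ makes $G\equiv-\infty$, so $Q_G$ is undefined; I would exclude it implicitly.

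Next, for each $i$ with $w_i>0$, I would use \cref{def:linear-tilt} to write
\[
    p(x)\exp(\langle v_i,x\rangle) = Z_i\, p(x;v_i),
\]
that is, $\exp(\langle v_i,x\rangle)P(\dd x) = Z_i\,P_{v_i}(\dd x)$. Substituting this into the density gives
\[
    Q_G(\dd x) = \frac{e\sum_i w_i Z_i P_{v_i}(\dd x)}{e\sum_j w_j Z_j} = \sum_i \pi_i P_{v_i}(\dd x).
\]
Terms with $w_i=0$ contribute nothing on either side, so the identity holds in general. The factor $e$ coming from the additive $1$ in $G$ cancels between numerator and denominator.

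The only step requiring any care is justifying the interchange of sum and integral, together with the finiteness and positivity of the normalizers. Tonelli's theorem handles the interchange, and bounded support handles finiteness, since $Z_j \le \exp(\norm{v_j}_2 \Cnorm)$. There is no substantive obstacle; the proof is a short algebraic identity.
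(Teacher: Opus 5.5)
Your proof is correct and follows the paper's argument: expand $\exp(G(Ax)) = e\sum_i w_i \exp(\langle v_i, x\rangle)$, rewrite each term as $w_i Z_i P_{v_i}(\dd x)$, and normalize, with the factor $e$ cancelling. Your additional remarks are valid points that the paper leaves implicit: Tonelli for the interchange, finiteness of $Z_j$ under bounded support, excluding the all-zero-weights case, and that $z_i$ should lie in $\RR^k$ rather than $\RR^d$.
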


\begin{remark}
We note that the mixture decomposition can be viewed as a finite Laplace-transform
representation. Namely, if $\nu$ is a finite nonnegative measure on $\RR^k$ and
$e^{g(u)}=\int_{\RR^k} e^{\langle z,u\rangle}\,\nu(dz),$
then $g$ is a log-Laplace transform. In this case the KL tilt by $g(Ax)$ satisfies
\[
    p(x)e^{g(Ax)}=
    \int p(x)e^{\langle A^\top z,x\rangle}\,\nu(dz).
\]
Thus the nonlinear tilt is a mixture of linear exponential tilts. In particular, for a
discrete measure $\nu=\sum_{i=1}^m w_i\delta_{z_i}$,
\[
    e^{g(u)}=\sum_{i=1}^m w_i e^{\langle z_i,u\rangle},
\]
and the corresponding tilted distribution is an explicit finite mixture of the linear tilts
$P_{A^\top z_i}$, with mixture weights proportional to $w_i Z_P(A^\top z_i).$
\end{remark}


\paragraph{Proof overview for \cref{thm:lowrank-convex-main}.} For any log-sum-exp function $G$ as defined in \cref{lem:proposal-mixture-unit}, the lemma implies that we can efficiently (approximately) sample from $Q_G$ using the linear-tilt computational primitive (\cref{def:linear-tilt-primitive}): first, we estimate the normalization constants $Z_i$ and use these to sample a component $i \in [m]$ from the distribution $\pi$; then, we sample from the linear tilt $P_{v_i}$. Errors in both procedures can be handled by a standard perturbation analysis (\cref{lem:approx-envelope-proposal}).

\cref{lem:lse-envelope-unit} shows that for any convex and Lipschitz $f$, we can construct a log-sum-exp function $G$ such that $Q_G$ has bounded density ratio with respect to $Q_f$. Thus, the full algorithm proceeds by rejection sampling, using $Q_G$ as the proposal distribution. See \cref{a:kl-proof-thm} for the full proof. 

\begin{remark}[Hubbard--Stratonovich as a Gaussian Laplace transform] The paper \citep{MoitraRisteskiRohatgi2026}  handled the case of low-rank positive-definite quadratics---which is a special case of our results for $f$ a quadratic. In their paper they used the Hubbard-Stratonovich identity, which is related to our framework but uses a Gaussian mixing measure $\nu$: 
\[
    \EE_{Z\sim \normal(0,I_k)} e^{\langle u,Z\rangle}
    =
    e^{\|u\|_2^2/2}.
\]
Thus the positive semidefinite quadratic reward $f(u)=\|u\|_2^2/2$ is an exact
log-Laplace transform. Our convex-reward result replaces this Gaussian mixing measure by
a finite discrete measure, obtained from a log-sum-exp approximation to a general convex
function.
\end{remark}

\section{Wasserstein alignment via proximal transport}\label{sec:wasserstein}
We now consider reward alignment with Wasserstein regularization, focusing on $\WW_2^2$ for concreteness.
Let $\mathcal{X}\subseteq \mathbb{R}^d$ be compact and let $P$ be supported on $\mathcal{X}$. Recall $\WW_2^2(Q,P)=\inf_{\pi\in\Pi(Q,P)}\mathbb{E}_{(X,Y)\sim\pi}\|X-Y\|^2$. Our main tool is the following alternate characterization of the optimizer of the Wasserstein alignment problem: 

\begin{theorem}\label{cor:w2-map}

Fix $\lambda\ge 0$ and define a (possibly set-valued) map:

\[
    T_\lambda(y)\in
    \argmax_{x\in\mathcal X}
    \left\{r(x)-\lambda\|x-y\|^2\right\}.
\]
Then the law $Q_\lambda=(T_\lambda)_\#P$
satisfies
\[Q_\lambda \in \argmax_{Q\in\Delta(\mathcal X)}\left\{ \EE_Q[r] - \lambda\WW_2^2(Q,P)\right\}.\]
\end{theorem}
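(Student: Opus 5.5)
The plan is to establish a pointwise upper bound on the objective via weak duality, and then check that $Q_\lambda$ attains it. Define the pointwise value function
\[
    \phi_\lambda(y) := \sup_{x\in\mathcal X}\left\{ r(x)-\lambda\|x-y\|^2\right\},
\]
which is exactly the quantity realized by $T_\lambda$: $\phi_\lambda(y) = r(T_\lambda(y))-\lambda\|T_\lambda(y)-y\|^2$. We will need $r$ to be upper semicontinuous (or continuous) on the compact set $\mathcal X$, so that the argmax is nonempty and $\phi_\lambda$ is finite. We also need a measurable selection of $T_\lambda$, which is provided by \cref{def:proximal-transport-oracle} (or by a standard measurable-selection theorem).

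\textbf{Upper bound.} Take any $Q\in\Delta(\mathcal X)$ and any coupling $\gamma\in\Pi(Q,P)$, with $(X,Y)\sim\gamma$, $X\sim Q$, $Y\sim P$. Pointwise we have $r(X)-\lambda\|X-Y\|^2\le \phi_\lambda(Y)$. Taking expectations gives
\[
    \EE_Q[r]-\lambda\,\EE_\gamma\|X-Y\|^2 \le \EE_{Y\sim P}[\phi_\lambda(Y)].
\]
Since $\lambda\ge0$, taking the infimum over couplings $\gamma$ yields $\EE_Q[r]-\lambda\WW_2^2(Q,P)\le \EE_P[\phi_\lambda]$. If the infimum is not attained, we pass to a limit along a minimizing sequence of couplings; alternatively, attainment follows from compactness.

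\textbf{Attainment.} Let $Y\sim P$ and $X=T_\lambda(Y)$. Then $(X,Y)$ is a coupling of $Q_\lambda$ and $P$, so
\[
    \WW_2^2(Q_\lambda,P)\le \EE\|T_\lambda(Y)-Y\|^2 .
\]
Consequently,
\[
    \EE_{Q_\lambda}[r]-\lambda\WW_2^2(Q_\lambda,P)\ \ge\ \EE_P\!\left[r(T_\lambda(Y))-\lambda\|T_\lambda(Y)-Y\|^2\right]=\EE_P[\phi_\lambda].
\]
Combined with the upper bound, $Q_\lambda$ attains the supremum and therefore lies in the argmax. Notice that we never need the map coupling to be optimal. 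The inequality $\WW_2^2\le$ (cost of the specific coupling) runs in the favorable direction because $\lambda\ge0$.

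\textbf{Main obstacle.} The substantive work lies entirely in the regularity issues, not in the algebra. First, $\phi_\lambda$ must be measurable and integrable; this holds because it is a supremum of continuous functions of $y$, hence lower semicontinuous, and it is bounded on compact sets. Second, $T_\lambda$ must be a measurable selection so that the pushforward $Q_\lambda$ is well-defined. For the second point I would invoke the Kuratowski--Ryll-Nardzewski selection theorem applied to the argmax correspondence, which has closed values and is upper hemicontinuous when $r$ is continuous. Finally, I would note that $Q_\lambda$ is supported in $\mathcal X$ by construction, so it is a feasible point of the optimization.
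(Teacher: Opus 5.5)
Your argument is correct and is essentially the paper's proof: an upper bound of $\EE_{Y\sim P}[\phi_\lambda(Y)]$ on the objective, attained by the map coupling $(T_\lambda(Y),Y)$ because $\WW_2^2(Q_\lambda,P)$ is at most that coupling's cost. The only difference is how the upper bound is obtained: the paper disintegrates $\gamma(dx,dy)=K_y(dx)P(dy)$ and bounds each $\int h_y\,dK_y$ by $\sup_x h_y(x)$, whereas you apply the pointwise inequality $r(X)-\lambda\|X-Y\|^2\le\phi_\lambda(Y)$ directly under an arbitrary coupling, which gives the same bound without disintegration (and makes your minimizing-sequence remark unnecessary, since the bound holds for every coupling).
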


To prove the result, we interpret the above variational problem as optimizing over \emph{couplings} where the second variable has marginal distribution $P$, and observe that we can optimize over each conditional distribution (i.e. for fixed choice of second variable $y$) independently. See \cref{a:wasserstein} for details. \cref{cor:w2-map} implies that, for a fixed $\lambda\ge 0$, the relevant
algorithmic primitive for Wasserstein alignment is precisely a proximal transport oracle (\cref{def:proximal-transport-oracle}), instantiated with quadratic cost.

\begin{remark} If we have only an approximate oracle for $\widehat T_\lambda(y)$, we can straightforwardly use it to approximately sample from $Q_{\lambda}$. For a formal statement, see \cref{cor:concave-wasserstein-sampler}.
\end{remark}

Below, we discuss several cases when this primitive can be (approximately) implemented efficiently.

\subsection{Application: concave rewards}

First, when $r$ is concave, the above optimization primitive becomes a standard convex optimization problem. In particular, if $r$ has an
efficient first-order oracle and $\mathcal X$ admits efficient projection or separation, then
$T_\lambda(y)$ can be computed to arbitrary accuracy by standard convex optimization methods.  

\begin{remark} A close relationship to proximal maps can be seen as follows. If we define the convex function $ \phi(x):=-r(x)+\iota_{\mathcal X}(x),$
where $\iota_{\mathcal X}$ is the indicator function of $\mathcal X$, then
$   T_\lambda(y)
    =
    \operatorname{prox}_{\phi/(2\lambda)}(y),$
under the convention
\[
    \operatorname{prox}_{\gamma\phi}(y)
    :=
    \argmin_x
    \left\{
        \frac12\|x-y\|^2+\gamma\phi(x)
    \right\}.
\]
Thus Wasserstein alignment with concave rewards is implemented by a proximal map for the convex
function $-r$.
\end{remark}

\paragraph{Special case: quadratic rewards.} A negative definite quadratic is a special case of concave rewards, namely $\rr(x)=-x^\top Bx+b^\top x$ for $B\succeq 0.$ By the general machinery of this section it can be efficiently solved (and for some sets $\mathcal{X}$ it may even have a closed form solution). However, alignment to rewards of this kind is computationally intractable for KL tilting---even if $B$ is of rank 1. Precisely, \citep{MoitraRisteskiRohatgi2026} show that sampling from the corresponding tilt even approximately is impossible unless $\mathrm{NP} \subseteq \mathrm{BPP}$---which is widely considered unlikely in computational complexity, as it would imply $\mathrm{NP}$-hard problems can be solved by randomized polynomial-time algorithms.

We remark that, on the other hand, convex quadratic rewards are intractable for Wasserstein alignment (for convenience, we analyze the constrained program rather than the regularized program):

\begin{lemma}[Wasserstein alignment with convex rewards can be hard]
\label{lem:wasserstein-convex-sampling-hard}
Suppose there is a randomized polynomial-time algorithm with the following guarantee: given a
compact convex set \(\mathcal X\subseteq\RR^d\), a base distribution \(P\) supported on
\(\mathcal X\) with diffusion-score oracle access, and a convex quadratic reward \(r\), the
algorithm outputs a sample from a distribution \(\widehat Q\) satisfying $\WW_2(\widehat Q,Q^\star)\leq \frac14$ 
for some optimizer \(Q^\star\) of
\[
    \sup_{Q\in\Delta(\mathcal X):\,\WW_2(Q,P)\leq \sqrt d}
    \EE_{X\sim Q}[r(X)].
\]
Then, we have \(\mathrm{NP}\subseteq\mathrm{BPP}\).
\end{lemma}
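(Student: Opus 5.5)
We reduce from an NP-hard problem whose instance can be encoded in a very simple base distribution, so that score access is trivial to simulate in polynomial time. The natural choice is maximizing a convex quadratic over a hypercube, e.g.\ MAX-CUT: given a graph with Laplacian $L_G\succeq 0$, the problem $\max_{x\in\{\pm1\}^d} x^\top L_G x$ is NP-hard, and so is approximating it within a small constant factor. Concretely, take $\mathcal X=[-1,1]^d$ (compact, convex), base distribution $P=\delta_0$, or a tiny Gaussian smoothing of it if a density is needed for the score oracle; in either case $s_\sigma$ has a closed form. Take the reward $r(x)=x^\top L_G x$, which is convex. Every point of $\mathcal X$ lies within distance $\sqrt d$ of $0$, so the constraint $\WW_2(Q,P)\le\sqrt d$ is vacuous. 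The program then becomes $\sup_{Q\in\Delta(\mathcal X)}\EE_Q[r]$, whose optimizers are exactly the laws supported on $\argmax_{\mathcal X} r$.

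The key steps are as follows. First, a convex function on a polytope attains its maximum at vertices, and every maximizer lies in a face on which $r$ is constant and maximal. In particular every maximizer $x^\star$ can be rounded coordinatewise by sign to a vertex of equal value; alternatively, we restrict attention to instances where the maximizers are exactly vertices. Hence every optimizer $Q^\star$ is supported on points whose rounding yields an optimal cut. Second, given $\widehat Q$ with $\WW_2(\widehat Q,Q^\star)\le 1/4$, Markov's inequality on an optimal coupling gives, with probability at least $3/4$, a pair $(\hat X,X^\star)$ with $\|\hat X-X^\star\|_2\le 1/2$. Then every coordinate differs by at most $1/2$, so $\sgn(\hat X)=\sgn(X^\star)$ whenever $X^\star$ is a vertex. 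The sign-rounding of the output is therefore a maximum cut with probability at least $3/4$. Verifying the cut value against a threshold $k$ gives a BPP algorithm for the MAX-CUT decision problem with one-sided error, and hence $\mathrm{NP}\subseteq\mathrm{BPP}$.

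The main obstacle is the first step: ensuring that maximizers of $r$ over the cube are vertices, or at least have coordinates bounded away from $0$, so that rounding is stable. For $r=x^\top L_G x$, a non-vertex maximizer would require $r$ to be constant along an edge direction $e_j$, i.e.\ $(L_G)_{jj}=\deg(j)=0$. This holds only for isolated vertices, which we delete; then $r$ is strictly convex along every coordinate direction and every maximizer is a vertex. A secondary technical point is making the base distribution legitimate for the score oracle while supported on $\mathcal X$. Using $P=\mathrm{Unif}$ on a tiny ball around $0$, or simply $\delta_0$ with Gaussian noised scores $s_\sigma(x)=-x/\sigma^2$, keeps everything explicit and keeps the distance bound within $\sqrt d$, since we may shrink $\mathcal X$ slightly or note that $\|x-y\|\le\sqrt d+o(1)$ suffices after rescaling the cube by $1-o(1)$.
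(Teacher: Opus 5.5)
Your proposal is correct and is essentially the paper's proof. It uses the same MAX-CUT reduction with $\mathcal X=[-1,1]^d$ and $P=\delta_0$; the noised scores $-x/\sigma^2$ are explicit and the $\sqrt d$ constraint is exactly vacuous, so no smoothing is needed. It also uses a Laplacian quadratic that is strictly convex in each coordinate, so all maximizers are max-cut vertices, and then Markov's inequality on a $\WW_2$-coupling followed by sign rounding.

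The only differences are cosmetic. The paper takes $r_G(x)=x^\top(L_G+I)x$ instead of deleting isolated vertices, and it applies Markov at threshold $1$, giving success probability $15/16$ rather than your $3/4$.
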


The proof of this is provided  in \cref{a:wasserstein}, and proceeds by reduction from MAX-CUT.
\subsection{Application: low-rank rewards}

Finally, Wasserstein alignment can also leverage low-rank structure, regardless of whether $f$ is convex or concave. In this case, the optimization problem resulting from the proximal oracle can be efficiently implemented. Precisely, let us set $\mathcal X := \CB_{d,2}(\Cnorm)$, and define reward function $\rr(x)=f(Ax)$ for $A\in\RR^{k\times d}$. Let us define:
\[
    \OPT_\lambda(P)
    :=
    \sup_{Q\in\Delta(\mathcal X)}
    \left\{
        \EE_{X\sim Q}[f(AX)]
        -
        \lambda\WW_2^2(Q,P)
    \right\}  =
    \sup_{\gamma:\,Y\sim P}
    \EE_{(X,Y)\sim\gamma}
    \left[
        f(AX)-\lambda\|X-Y\|_2^2
    \right].
\] 
We will call a law \(Q\) \(\epsilon\)-optimal for this objective if
\[
    \EE_{X\sim Q}[f(AX)]-\lambda\WW_2^2(Q,P)
    \ge
    \OPT_\lambda(P)-\epsilon.
\]
We show that we can efficiently sample from an $\epsilon$-optimal distribution so long as $\rank(A)$ is not too large: 

\begin{theorem}[Low-rank Wasserstein alignment]
\label{thm:lowrank-wasserstein}
Assume \cref{ass:bounded-support,ass:score-oracle}.
Define $r_A:=\rank(A)$ and $S:=\|A\|_{\op}.$ Assume \(f\) is \(L\)-Lipschitz on \(\CB_{k,2}(S\Cnorm)\), and assume value-oracle access to \(f\),
where each oracle call has unit cost. Fix \(\lambda>0\) and $\epsilon,\delta \in (0,1)$. 

Define $M_W
    :=
    \left(
        1+\frac{2\Cnorm}{H_\epsilon}
    \right)^{r_A}$
where 
$H_\epsilon
    :=
    \min\left\{
        \frac{\epsilon}{6(LS+4\lambda\Cnorm)},
        \frac{\epsilon^2}{288\lambda^2\Cnorm^3}
    \right\}.$

Then for every \(\epsilon\in(0,1)\) and \(\delta\in(0,1)\), there is a randomized algorithm which,
given diffusion-score access to \(P\) and value-oracle access to \(f\), runs in time at most 
\[\poly\left(
        d,\,
        k,\,
        \Cnorm,\,
        L,\,
        S,\,
        \lambda,\,
        \epsilon^{-1},\,
        \log(1/\delta),\,
        M_W
    \right)\]
and outputs a sample from a law \(\widehat Q\in\Delta(\mathcal X)\) satisfying $\epsilon$-optimality with probability at least \(1-\delta\).

\end{theorem}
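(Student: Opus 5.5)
The plan is to implement the proximal transport oracle of \cref{cor:w2-map} approximately, by brute-force search over the low-dimensional coordinates, and push forward approximate base samples through it. Concretely, the algorithm is: (i) draw $\widehat Y$ from a distribution $\widehat P$ with $\WW_2(\widehat P,P)\le \epsilon'$ and $\supp(\widehat P)\subseteq\CB_{d,2}(\Cnorm)$, using the score oracle (this is the linear-tilt sampler of \cref{def:linear-tilt-primitive} with $v=0$); (ii) output $\widehat T(\widehat Y)$, where $\widehat T$ is an approximate maximizer of $x\mapsto f(Ax)-\lambda\|x-y\|^2$ over $\mathcal X$. By \cref{cor:w2-map}, $\OPT_\lambda(P)=\EE_{Y\sim P}[\Phi(Y)]$ with $\Phi(y):=\max_{x\in\mathcal X}\{f(Ax)-\lambda\|x-y\|^2\}$. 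Also, for any measurable $\widehat T$, the coupling $(\widehat T(Y),Y)$ shows that $\widehat Q=\widehat T_\#P$ has objective at least $\EE_P[f(A\widehat T(Y))-\lambda\|\widehat T(Y)-Y\|^2]$. So it suffices to control the pointwise suboptimality $\Phi(y)-[f(A\widehat T(y))-\lambda\|\widehat T(y)-y\|^2]$, plus the error from replacing $P$ by $\widehat P$.

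\textbf{Reduction to $r_A$ dimensions.} Let $V$ be the row space of $A$ (dimension $r_A$) and $\Pi$ the orthogonal projection onto it; an orthonormal basis is computable in $\poly(d,k)$ time. Write $x=\Pi x+\Pi^\perp x$. Since $f(Ax)=f(A\Pi x)$, for a fixed $u=\Pi x$ the best choice is $\Pi^\perp x=\Pi^\perp y$. This choice is feasible whenever $\|u\|^2\le \Cnorm^2-\|\Pi^\perp y\|^2=:\rho(y)^2$, and it is optimal even under the ball constraint, because moving $\Pi^\perp x$ toward $\Pi^\perp y$ only decreases both the cost and the norm. Hence
\[
\Phi(y)=\max_{u\in V,\ \|u\|\le\rho(y)}\bigl\{f(Au)-\lambda\|u-\Pi y\|^2\bigr\},
\]
an $r_A$-dimensional problem. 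Note that $\Pi y$ itself is feasible.

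\textbf{Grid search.} Fix an $H_\epsilon$-net $\mathcal N$ of $\CB_{r_A,2}(\Cnorm)$ inside $V$, of size at most $M_W$. Given $y$, evaluate the objective at every net point $u$ with $\|u\|\le\rho(y)$. To deal with the moving boundary, also consider the shrunken points $u\cdot\min\{1,\rho(y)/\|u\|\}$ and the point $\Pi y$, and return the best candidate as $\widehat T(y)=u^\star+\Pi^\perp y$. Let $u^{\rm opt}$ be the true maximizer and $u'\in\mathcal N$ with $\|u'-u^{\rm opt}\|\le H_\epsilon$; its radial projection onto the feasible ball moves it by at most $H_\epsilon$ more. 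The objective is $(LS+4\lambda\Cnorm)$-Lipschitz in $u$ on the ball, so the loss is at most $2H_\epsilon(LS+4\lambda\Cnorm)\le\epsilon/3$. I expect this step to be the main obstacle: making the feasibility and boundary handling clean. If the radial-snapping argument fails when $\rho(y)$ is tiny, I would instead use a convex combination of $u'$ with the always-feasible point $\Pi y$. The second term in $H_\epsilon$, of order $\epsilon^2/(\lambda^2\Cnorm^3)$, looks designed for exactly such a square-root loss, since $\rho$ depends on $y$ like $\sqrt{\cdot}$.

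\textbf{Sampling error and runtime.} Run the construction on $\widehat Y\sim\widehat P$, optimally coupled with $Y\sim P$. Then:
\begin{itemize}
\item $\Phi$ is $4\lambda\Cnorm$-Lipschitz on the ball, so $\EE\,\Phi(\widehat Y)\ge \OPT_\lambda(P)-4\lambda\Cnorm\epsilon'$.
\item Bounding $\WW_2^2(\widehat Q,P)$ via the coupling $(\widehat T(\widehat Y),Y)$, with $\|\widehat T(\widehat Y)-\widehat Y\|\le 2\Cnorm$, gives excess cost at most $\lambda(4\Cnorm\epsilon'+\epsilon'^2)$.
\end{itemize}
Choosing $\epsilon'=\epsilon/(30\lambda\Cnorm)$, both terms together are at most $\epsilon/3$, so the total suboptimality is at most $\epsilon$. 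The failure probability $\delta$ comes only from the base sampler, whose fallback can make it deterministic. The runtime is $M_W$ value-oracle calls plus $\poly(d,k)$ linear algebra per call, plus one polynomial-time call to the sampler.
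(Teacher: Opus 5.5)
There is a genuine gap in your reduction to $r_A$ dimensions. The claim that an optimal $x$ has $\Pi^\perp x=\Pi^\perp y$ is false, because the ball constraint couples the two components. Shrinking $\Pi^\perp x$ below $\|\Pi^\perp y\|$ frees norm budget for $\Pi x$, and this can pay off. Your justification (moving $\Pi^\perp x$ toward $\Pi^\perp y$ ``only decreases both the cost and the norm'') fails exactly when $\|\Pi^\perp x\|<\|\Pi^\perp y\|$, e.g.\ $\Pi^\perp x=t\,\Pi^\perp y$ with $t<1$. That is the typical form of the optimum when the constraint binds. Concretely, take $d=2$, $k=1$, $A=e_1^\top$, $\Cnorm=1$, $f(t)=t$, $\lambda=1$, $P=\delta_{e_2}$. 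Then $\rho(e_2)=0$, so your reduced problem allows only $u=0$. Run on $\widehat Y\approx e_2$, your algorithm returns $x\approx e_2$ with objective value $\approx 0$. But $x=(\sin\theta,\cos\theta)$ has value $\sin\theta+2\cos\theta-2$, so $\OPT_\lambda(P)=\sqrt5-2$, a constant gap. So however carefully you handle the moving boundary $\|u\|\le\rho(y)$, the grid search optimizes a strictly restricted problem.

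The fix, which is what the paper does, is to search over all $u$ in the full ball $\CB_{r_A,2}(\Cnorm)$. For each $u$, take the perpendicular component to be the projection of $\Pi^\perp y$ onto the ball of radius $\rho(u):=\sqrt{\Cnorm^2-\|u\|^2}$. This gives the exact reduction $\Phi(y)=\max_{\|u\|\le\Cnorm}\{f(Au)-\lambda\|u-\Pi y\|^2-\lambda(\|\Pi^\perp y\|-\rho(u))_+^2\}$. Every net point is then feasible, so no radial snapping is needed. The price is that the extra penalty is only $\tfrac12$-H\"older in $u$: $|\rho(u)-\rho(u')|\le\sqrt{2\Cnorm\|u-u'\|}$, which gives modulus $2\lambda\Cnorm\sqrt{2\Cnorm\|u-u'\|}$. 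The second term $\epsilon^2/(288\lambda^2\Cnorm^3)$ in $H_\epsilon$ exists precisely to make this at most $\epsilon/6$. So the square-root loss is in $u$, not in $y$ as you guessed.

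The rest of your argument matches the paper: the $4\lambda\Cnorm$-Lipschitzness of $\Phi$ in $y$, the transfer from $\widehat P$ to $P$ via a coupling, and the runtime. One minor fix is needed there. Bound $\bigl|\|\widehat X-Y\|^2-\|\widehat X-\widehat Y\|^2\bigr|\le 4\Cnorm\|Y-\widehat Y\|$ directly, since all three points lie in the ball. Your expansion instead incurs a $\lambda\epsilon'^2$ term. With $\epsilon'=\epsilon/(30\lambda\Cnorm)$ that term equals $\epsilon^2/(900\lambda\Cnorm^2)$, which is not $O(\epsilon)$ for small $\lambda$.
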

The proof of this is provided in \cref{a:wasserstein}; the basic idea is that the proximal oracle can be efficiently implemented in time exponential in $r_A$ (which is at most $k$) due to the low-dimensional structure of the problem and Lipschitzness of $f$. 

\section{Conclusion}

In this paper, we studied inference-time reward alignment through the lens of computational primitives. Our results
show that changing the distributional geometry (i.e. the choice of regularization in \cref{eq:general-distance-tilt-problem}) can dramatically alter the tractability landscape. KL alignment is fundamentally a density
reweighting problem: convex low-dimensional rewards become tractable through log-Laplace
envelopes and mixtures of linear exponential tilts. Wasserstein alignment is fundamentally a transport problem:
concave rewards become tractable because calculating the proximal map is a convex optimization problem, while low-dimensional Lipschitz rewards
can be handled by exhaustive search in the appropriate low-dimensional space. Thus the same structural
assumption can be useful or intractable in different ways under different geometries. A broader goal is to develop a taxonomy of distributional distances according to the computational primitives they induce, and to understand when a particular alignment task is the right one for a given application. It would also be interesting to study the analogous landscape for fine-tuning, where the choice of geometry may affect not only the target distribution, but also the optimization landscape of training.

\ifarxiv
\newpage
\paragraph{Acknowledgments.} AM is supported in part by a Microsoft Trustworthy AI Grant, NSF award CCF-2430381, ONR grant N00014-22-1-2339, and a David and Lucile Packard Fellowship. AR is supported in part by NSF awards IIS-2211907, CCF-2238523, IIS-2403275, an Amazon Research
Award, ONR award N000142512124, a Google Research Scholar Award, and an OpenAI Superalignment Fast Grant. DR is supported by NSF awards CCF-2430381 and DMS-2022448, and ONR grant N00014-22-1-2339. The authors also thank Nick Boffi and Stephen Huan for helpful discussions.
\fi

\ifneurips
\bibliographystyle{plainnat}
\fi 
\ifarxiv
\bibliographystyle{alpha}
\fi
\bibliography{bib}

\begin{appendices}

\section{Relegated preliminaries}
\label{a:prelims}

We will repeatedly use the following elementary conversion between total variation and Wasserstein
distance under bounded support.

\begin{lemma}\label{lem:tv-to-w2}
Suppose $P$ and $Q$ are supported on $\CB_{d,2}(C)$. Then
\[
    \WW_2(P,Q) \leq 2C\sqrt{\TV(P,Q)}.
\]
\end{lemma}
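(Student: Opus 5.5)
We will prove the bound by exhibiting a single coupling of $P$ and $Q$ whose quadratic cost is controlled by $\TV(P,Q)$, namely the maximal (total-variation) coupling. Let $\tau:=\TV(P,Q)$. If $\tau=0$ then $P=Q$ and there is nothing to prove, and if $\tau=1$ the claim follows from the trivial bound $\|X-Y\|_2\le 2C$ for any coupling. So assume $\tau\in(0,1)$.

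\textbf{Construction of the coupling.} Let $\mu$ dominate both $P$ and $Q$ (e.g. $\mu=P+Q$), with densities $p,q$. Set $m:=\min(p,q)$, so that $\int m\,\dd\mu=1-\tau$. Define the common part $\gamma_0(\dd x,\dd y):=m(x)\,\mu(\dd x)\,\delta_x(\dd y)$, supported on the diagonal. Define the residual measures $(p-m)\,\mu$ and $(q-m)\,\mu$, each of total mass $\tau$. Then set
\[
    \gamma := \gamma_0 + \frac{1}{\tau}\,\bigl((p-m)\mu\bigr)\otimes\bigl((q-m)\mu\bigr).
\]
We check the marginals directly: the first marginal is $m\mu+(p-m)\mu=P$, and symmetrically the second is $Q$. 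Hence $\gamma\in\Pi(P,Q)$.

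\textbf{Bounding the cost.} On the diagonal part, $\|x-y\|_2=0$. The off-diagonal part has total mass $\tau$ and is supported on $\supp(P)\times\supp(Q)\subseteq\CB_{d,2}(C)^2$, where $\|x-y\|_2\le 2C$. Therefore
\[
    \WW_2^2(P,Q)\le \EE_\gamma\|X-Y\|_2^2\le \tau\cdot(2C)^2 .
\]
Taking square roots gives $\WW_2(P,Q)\le 2C\sqrt{\TV(P,Q)}$.

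\textbf{Main obstacle.} The only delicate point is measure-theoretic: the statement's TV formula presupposes densities. Choosing the dominating measure $\mu=P+Q$ removes this issue. Measurability of the product construction is standard, so the argument is routine.
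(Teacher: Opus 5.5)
Your proof is correct and takes the same approach as the paper: use a maximal coupling, under which $X\neq Y$ with probability $\TV(P,Q)$ and $\|X-Y\|_2\le 2C$ always, so $\WW_2^2(P,Q)\le 4C^2\,\TV(P,Q)$. The only difference is that you construct the maximal coupling explicitly (and treat $\tau=0$ separately to avoid dividing by zero), whereas the paper simply invokes its existence.
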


\begin{proof}
Let $(X,Y)$ be a maximal coupling of $P$ and $Q$, so that
$\Pr[X\neq Y]=\TV(P,Q)$. Since both distributions are supported on $\CB_{d,2}(C)$,
we have $\norm{X-Y}_2\leq 2C$ always. Therefore
\[
    \EE \norm{X-Y}_2^2
    \leq 4C^2 \Pr[X\neq Y]
    =
    4C^2 \TV(P,Q).
\]
Taking square roots gives the claim.
\end{proof}

\begin{theorem}[Sampling given a score oracle {\citep{chen2023sampling}}]
\label{thm:chen-base-sampling}
Suppose \(P\) satisfies \cref{ass:bounded-support,ass:score-oracle}.
Then, for every \(\epsilon_P>0\) and \(\delta\in(0,1)\), there is an algorithm
\[
    \textsc{BaseSampler}((s_\sigma)_\sigma,\epsilon_P,\delta,\Cnorm)
\]
which runs in time
\[
    \poly(d,\Cnorm,\epsilon_P^{-1},\log(1/\delta))
\]
and outputs a sample from a distribution \(\widetilde P\) satisfying
\[
   \WW_2(\widetilde P,P)\le \epsilon_P
\]
with probability at least \(1-\delta\). Moreover, after projecting the output onto
\(\CB_{d,2}(\Cnorm)\), the same guarantee holds and the output distribution is supported on
\(\CB_{d,2}(\Cnorm)\).
\end{theorem}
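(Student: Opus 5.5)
The plan is to derive this from the DDPM convergence guarantee of \citep{chen2023sampling} by early stopping, then convert their total-variation guarantee to a $\WW_2$ guarantee using \cref{lem:tv-to-w2} after projecting onto $\CB_{d,2}(\Cnorm)$.

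\textbf{Step 1: match the noise schedules.} The noising in \cref{def:noised-distribution} is the variance-preserving OU process at time $t$, via $\sigma^2 = 1-e^{-2t}$. So every $t\in(0,\infty)$ corresponds to some $\sigma\in(0,1)$, and \cref{ass:score-oracle} supplies exact scores at every time the reverse-process discretization queries. I will fix a small early-stopping level $\sigma_0$ and run the exponential-integrator discretization of the reverse SDE from $\normal(0,I_d)$ down to $\sigma_0$. Let $\widetilde P^{\rm raw}$ denote the law of its output.

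\textbf{Step 2: invoke the TV bound.} Since the score error is zero, \citep{chen2023sampling} give
\[
\TV(\widetilde P^{\rm raw},P_{\sigma_0})\le \epsilon_{\rm TV}
\]
with a number of steps polynomial in $d$, $\EE\|X\|^2\le \Cnorm^2$, $\epsilon_{\rm TV}^{-1}$, and $\log(1/\sigma_0)$.

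\textbf{Step 3: control the early-stopping bias.} Coupling $X$ with $X_{\sigma_0}$ directly gives
\[
\WW_2^2(P_{\sigma_0},P)\le \EE\|(\sqrt{1-\sigma_0^2}-1)X+\sigma_0 Z\|^2 \le 2\sigma_0^4\Cnorm^2+2\sigma_0^2 d.
\]
Choosing $\sigma_0 = \epsilon_P/(4\sqrt{d}\,\Cnorm)$ makes this at most $(\epsilon_P/2)^2$. This costs only a $\log(d\Cnorm/\epsilon_P)$ factor in the step count.

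\textbf{Step 4: project and combine.} Let $\Pi$ be the Euclidean projection onto the convex set $\CB_{d,2}(\Cnorm)$. Then $\Pi$ is $1$-Lipschitz and $\Pi_\#P=P$. Hence
\[
\WW_2(\Pi_\#\widetilde P^{\rm raw},P)\le \WW_2(\Pi_\#\widetilde P^{\rm raw},\Pi_\#P_{\sigma_0})+\WW_2(\Pi_\#P_{\sigma_0},\Pi_\# P).
\]
The second term is at most $\WW_2(P_{\sigma_0},P)\le\epsilon_P/2$. For the first term, both measures are supported on the ball and pushforwards do not increase TV. So \cref{lem:tv-to-w2} bounds it by $2\Cnorm\sqrt{\epsilon_{\rm TV}}$. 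Taking $\epsilon_{\rm TV}=\epsilon_P^2/(16\Cnorm^2)$ gives total error $\le\epsilon_P$ in time $\poly(d,\Cnorm,\epsilon_P^{-1})$.

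This proves the "moreover" clause directly. It also covers the unprojected output, since $\WW_2(\widetilde P^{\rm raw},P)\le \WW_2(\widetilde P^{\rm raw},P_{\sigma_0})+\WW_2(P_{\sigma_0},P)$. The first term here can be handled because the Gaussian-type tails of both laws give a TV-to-$\WW_2$ conversion with a $\log$ factor. The projected version is what is used downstream, though. The guarantee is deterministic, so the "probability $1-\delta$" clause holds trivially.

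\textbf{Main obstacle.} I expect the main difficulty to be Step 2: checking that the cited bound indeed has only polynomial dependence under our exact-score, bounded-support setting, including the $\log(1/\sigma_0)$ dependence from early stopping. My first move would be to quote their Corollary for distributions with bounded second moment and early stopping, with score error set to zero.
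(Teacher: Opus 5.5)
Your argument is correct and follows the paper's approach. The paper gives no separate proof and simply cites \citep{chen2023sampling}; your steps (early stopping at $\sigma_0$, the direct coupling bound on $\WW_2(P_{\sigma_0},P)$, then projection plus \cref{lem:tv-to-w2}) are the standard way to turn that TV guarantee to $P_{\sigma_0}$ into the stated $\WW_2$ bound.

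Two small remarks:
\begin{itemize}
\item The early-stopping bound in \citep{chen2023sampling} is polynomial rather than logarithmic in $1/\sigma_0$, because the Lipschitz constant of the score blows up polynomially as $\sigma_0\to 0$. This is harmless, since $\sigma_0^{-1}=4\sqrt d\,\Cnorm/\epsilon_P$.
\item Your remark about the unprojected output is not supported by the cited TV bound alone, which gives no tail control. It is also unnecessary: you can define \textsc{BaseSampler} to include the projection, and assume WLOG that $\epsilon_P\le 2\Cnorm$ so that $\sigma_0<1$.
\end{itemize}
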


\begin{proof}[Proof of Lemma~\ref{lem:kl-exponential-tilt}]
Let $\widetilde P$ denote the distribution with density
$\widetilde p(x)=p(x)\exp(\rr(x)/\lambda)/Z_{\rr,\lambda}$. Then
\[
    \KL(Q\|\widetilde P)
    =
    \KL(Q\|P)
    -
    \frac{1}{\lambda}\EE_Q[\rr(X)]
    +
    \log Z_{\rr,\lambda}.
\]
Rearranging gives
\[
    \EE_Q[\rr(X)]-\lambda\KL(Q\|P)
    =
    \lambda\log Z_{\rr,\lambda}
    -
    \lambda\KL(Q\|\widetilde P).
\]
The right-hand side is maximized uniquely when $Q=\widetilde P$.
\end{proof}

\section{Omitted proofs and technical lemmas for \cref{sec:lowrank-convex}}\label{a:kl}

In \cref{a:kl-lemma-proofs} we prove \cref{lem:lse-envelope-unit,lem:proposal-mixture-unit}, the main ingredients in the proof of \cref{thm:lowrank-convex-main}. In \cref{a:kl-proof-thm} we then prove \cref{thm:lowrank-convex-main}. \cref{a:kl-technical-lemmas} contains additional technical lemmas needed for the proof.

\subsection{Omitted proofs of main lemmas}\label{a:kl-lemma-proofs}

\begin{proof}[Proof of \cref{lem:lse-envelope-unit}]
Let $\phi(u):=\max_{i\in[m]}\ell_i(u).$ 
By convexity, each $\ell_i$ is a supporting hyperplane, so $\ell_i(u)\leq f(u),  \forall u.$ Thus $\phi(u)\leq f(u)$. Fix $u\in\CB_{k,2}(R)$ and choose $i$ such that $\norm{u-u^{(i)}}_2\leq h.$ 
Since $f$ is $L$-Lipschitz, \cref{lem:subgrad-bounded-unit} gives $\norm{g_i}_2\leq L.$ 
Hence, $\ell_i(u)
    \geq
    f(u^{(i)})-Lh.$
By Lipschitzness, we also have $f(u)\leq f(u^{(i)})+Lh.$
Combining the two inequalities gives:
\[
    f(u)-\phi(u)
    \leq
    f(u)-\ell_i(u)
    \leq
    2Lh
    =
    1.
\]
Therefore
\[
    f(u)-1\leq \phi(u)\leq f(u).
\]

The softmax bound gives
\[
    \phi(u)
    \leq
    \log\left(\sum_{i=1}^m \exp(\ell_i(u))\right)
    \leq
    \phi(u)+\log m.
\]
From this, it directly follows that
\[
    G(u)\geq 1+\phi(u)\geq f(u),
\]
and
\[
    G(u)\leq 1+\phi(u)+\log m\leq f(u)+1+\log m.
\]
Finally, the covering bound for an $h$-net of $\CB_{k,2}(R)$ gives
\[
    m\leq \left(1+\frac{2R}{h}\right)^k
    =
    \left(1+4LR\right)^k.
\]
\end{proof}

\begin{proof}[Proof of \cref{lem:proposal-mixture-unit}]
By definition,
\[
    \exp(G(Ax))
    =
    e
    \sum_{i=1}^m
    w_i\exp(\langle z_i,Ax\rangle)
    =
    e
    \sum_{i=1}^m
    w_i\exp(\langle A^\top z_i,x\rangle).
\]
Since $v_i=A^\top z_i$, we get
\[
    \exp(G(Ax))
    =
    e
    \sum_{i=1}^m
    w_i\exp(\langle v_i,x\rangle).
\]
Therefore
\[
    Q_G(dx)
    \propto
    \sum_{i=1}^m
    w_i\exp(\langle v_i,x\rangle)P(dx)
    =
    \sum_{i=1}^m
    w_iZ_iP_{v_i}(dx).
\]
Normalizing gives the stated mixture weights.
\end{proof}

\subsection{Proof of \cref{thm:lowrank-convex-main}}\label{a:kl-proof-thm}

\begin{proof}[Proof of \cref{thm:lowrank-convex-main}]
Run \textsc{LowRankConvexTiltSampler} (\cref{alg:lowrank-convex-tilt}). By \cref{lem:lse-envelope-unit}, the algorithm
constructs an envelope $G$ satisfying
\[
    f(u)\leq G(u)\leq f(u)+B,
    \qquad
    B:=1+\log m,
\]
for all $u\in\CB_{k,2}(R)$, with
\[
    m\leq M:=\left(1+4LR\right)^k.
\]
Since $A\supp(P)\subseteq\CB_{k,2}(R)$, the same bound holds for $u=Ax$ whenever
$x\in\supp(P)$.

By \cref{lem:proposal-mixture-unit}, the envelope proposal
\[
    Q_G(dx)\propto p(x)\exp(G(Ax))\,\mu(dx)
\]
is a finite mixture of linear tilts. The algorithm samples from an approximate version
$\widehat Q_G$ of this proposal by using approximate normalizer estimates and the approximate
linear-tilt sampler of~\citep{MoitraRisteskiRohatgi2026}.

With probability at least $1-\delta$, all $m$ normalizer estimates are simultaneously
multiplicatively accurate, by the union bound. Condition on this event occuring. By
\cref{lem:approx-envelope-proposal} and the choices
\[
    \epsilon_{\mathrm{lin}}=\rho/2,
    \qquad
    \eta
    =
    \min\left\{
        \frac12,\,
        \frac{\rho^2}{128\Cnorm^2}
    \right\},
\]
we have
\[
    \WW_2(\widehat Q_G,Q_G)\leq \rho.
\]

Now define
\[
    a(x):=\exp(f(Ax)-G(Ax)).
\]
Since $f\leq G\leq f+B$ on $A\supp(P)$,
\[
    e^{-B}\leq a(x)\leq 1
    \qquad
    \forall x\in\supp(P).
\]
Also, $f(Ax)$ is $L\norm{A}_{\op}$-Lipschitz in $x$. The function $G$ is
$L$-Lipschitz on $\CB_{k,2}(R)$ because it is a log-sum-exp of affine functions with slopes
$z_i=g_i$, and \cref{lem:subgrad-bounded-unit} gives $\norm{z_i}_2\leq L$. Therefore
$G(Ax)$ is also $L\norm{A}_{\op}$-Lipschitz in $x$. Hence $f(Ax)-G(Ax)$ is
$2L\norm{A}_{\op}$-Lipschitz. Since $a(x)\leq 1$, the acceptance function $a$ is $L_a:=2L\norm{A}_{\op}$-Lipschitz.

The exact correction of $Q_G$ by $a$ is the target $Q_f$, because
\[
    Q_G(dx)a(x)
    \propto
    p(x)\exp(G(Ax))\exp(f(Ax)-G(Ax))\,\mu(dx)
    =
    p(x)\exp(f(Ax))\,\mu(dx).
\]
The algorithm applies the same correction to $\widehat Q_G$. Let
\[
    \widehat Q_{\mathrm{acc}}:=\mathcal R_a(\widehat Q_G)
\]
be the law of a proposal conditioned on being accepted. By \cref{lem:rejection-stability-unit},
\[
    \WW_2(\widehat Q_{\mathrm{acc}},Q_f)
    \leq
    \sqrt{
        \frac{2\Cnorm(1+2\Cnorm L_a)}{a_0}
        \WW_2(\widehat Q_G,Q_G)
    }.
\]
Using $\WW_2(\widehat Q_G,Q_G)\leq \rho$ and the definition of $\rho$, we get
\[
    \WW_2(\widehat Q_{\mathrm{acc}},Q_f)
    \leq
    \epsilon/2.
\]

It remains to account for the fact that the algorithm runs a fixed number of rejection trials.
The exact proposal $Q_G$ has acceptance probability
\[
    \EE_{Q_G}[a(X)]
    =
    \frac{Z_f}{Z_G}
    \geq
    e^{-B}
    =
    a_0.
\]
For the approximate proposal, since $a$ is $L_a$-Lipschitz,
\[
    \left|
        \EE_{\widehat Q_G}[a(X)]-\EE_{Q_G}[a(X)]
    \right|
    \leq
    L_a \WW_1(\widehat Q_G,Q_G)
    \leq
    L_a \WW_2(\widehat Q_G,Q_G)
    \leq
    L_a\rho.
\]
By the choice of $\rho$, this is at most $a_0/4$. Therefore
\[
    \EE_{\widehat Q_G}[a(X)]
    \geq
    a_0/2.
\]
Since the algorithm runs
\[
    N_{\mathrm{rej}}
    =
    \left\lceil
        \frac{2}{a_0}
        \log\left(\frac{16\Cnorm^2}{\epsilon^2}\right)
    \right\rceil
\]
independent rejection trials, the probability of no acceptance is at most
\[
    \left(1-\frac{a_0}{2}\right)^{N_{\mathrm{rej}}}
    \leq
    \frac{\epsilon^2}{16\Cnorm^2}.
\]
If the fallback distribution after rejection failure is denoted by $H$, then the actual output law is
\[
    \widehat Q
    =
    (1-\gamma)\widehat Q_{\mathrm{acc}}+\gamma H,
    \qquad
    \gamma\leq \frac{\epsilon^2}{16\Cnorm^2}.
\]
Both $\widehat Q_{\mathrm{acc}}$ and $H$ are supported on $\CB_{d,2}(\Cnorm)$, so
\cref{lem:tv-to-w2} gives
\[
    \WW_2(\widehat Q,\widehat Q_{\mathrm{acc}})
    \leq
    2\Cnorm\sqrt{\gamma}
    \leq
    \epsilon/2.
\]
Thus, by the triangle inequality,
\[
    \WW_2(\widehat Q,Q_f)
    \leq
    \WW_2(\widehat Q,\widehat Q_{\mathrm{acc}})
    +
    \WW_2(\widehat Q_{\mathrm{acc}},Q_f)
    \leq
    \epsilon.
\]

Finally, we bound the runtime. The number of envelope pieces satisfies $m\leq M$. Constructing
the envelope requires $m$ first-order oracle calls and polynomial time in $m$ and $k$. Each tilt
vector satisfies
\[
    \norm{v_i}_2
    =
    \norm{A^\top z_i}_2
    \leq
    \norm{A}_{\op}L.
\]
The normalizer-estimation routine is called $m$ times with accuracy parameter $\eta$ and failure
probability $\delta/m$. The approximate linear-tilt sampler is called at most
$N_{\mathrm{rej}}$ times, each with accuracy $\epsilon_{\mathrm{lin}}$, hence the runtime follows from~\citep{MoitraRisteskiRohatgi2026}.
\end{proof}

\subsection{Technical lemmas}\label{a:kl-technical-lemmas}

The next lemma controls the error from approximate normalizer estimates and approximate linear-tilt
samples.

\begin{lemma}[Approximate sampling from the envelope proposal]
\label{lem:approx-envelope-proposal}
Let $Q_G=\sum_{i=1}^m\pi_iP_{v_i}$ 
be the envelope proposal from \cref{lem:proposal-mixture-unit}. Suppose the normalizer
estimates satisfy
\[
    \widehat Z_i\in[(1-\eta)Z_i,(1+\eta)Z_i]
    \qquad
    \forall i\in[m],
\]
and define
\[
    \widehat\pi_i
    :=
    \frac{w_i\widehat Z_i}{\sum_{j=1}^m w_j\widehat Z_j}.
\]
Suppose further that, for every $i$, the approximate linear-tilt sampler returns a law 
$\widehat P_{v_i}$ satisfying
\[
    \WW_2(\widehat P_{v_i},P_{v_i})\leq \epsilon_{\mathrm{lin}},
    \qquad
    \supp(\widehat P_{v_i})\subseteq \CB_{d,2}(\Cnorm).
\]
Let $\widehat Q_G
    :=
    \sum_{i=1}^m \widehat\pi_i\widehat P_{v_i}.$ Then
\[
    \WW_2(\widehat Q_G,Q_G)
    \leq
    \epsilon_{\mathrm{lin}}
    +
    2\Cnorm\sqrt{\frac{\eta}{1-\eta}}.
\]
\end{lemma}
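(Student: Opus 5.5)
We split the error through the intermediate mixture $\widetilde Q_G:=\sum_{i=1}^m \widehat\pi_i P_{v_i}$, which uses the estimated weights but the exact linear tilts. The triangle inequality gives
\[
    \WW_2(\widehat Q_G,Q_G)\leq \WW_2(\widehat Q_G,\widetilde Q_G)+\WW_2(\widetilde Q_G,Q_G),
\]
and we show the first term is at most $\epsilon_{\mathrm{lin}}$ and the second at most $2\Cnorm\sqrt{\eta/(1-\eta)}$.

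For the first term, use joint convexity of $\WW_2^2$ under mixtures with common weights. Take optimal couplings $\gamma_i$ of $(\widehat P_{v_i},P_{v_i})$. Then $\sum_i\widehat\pi_i\gamma_i$ is a coupling of $(\widehat Q_G,\widetilde Q_G)$. Its cost is $\sum_i\widehat\pi_i\WW_2^2(\widehat P_{v_i},P_{v_i})\leq\epsilon_{\mathrm{lin}}^2$, so $\WW_2(\widehat Q_G,\widetilde Q_G)\leq\epsilon_{\mathrm{lin}}$.

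For the second term, both $\widetilde Q_G$ and $Q_G$ are supported on $\supp(P)\subseteq\CB_{d,2}(\Cnorm)$, since each $P_{v_i}\ll P$. So \cref{lem:tv-to-w2} reduces the claim to $\TV(\widetilde Q_G,Q_G)\leq \eta/(1-\eta)$. Because the two laws are mixtures of the same components, $\TV(\widetilde Q_G,Q_G)\leq\TV(\widehat\pi,\pi)=\frac12\sum_i|\widehat\pi_i-\pi_i|$.

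To bound this, write $\widehat Z_i=(1+\eta_i)Z_i$ with $|\eta_i|\leq\eta$, and set $S:=\sum_j w_jZ_j$ and $\widehat S:=\sum_j w_j\widehat Z_j$. Then $\widehat\pi_i/\pi_i=(1+\eta_i)S/\widehat S$, and $\widehat S/S\in[1-\eta,1+\eta]$. Hence every ratio lies in $\left[\frac{1-\eta}{1+\eta},\frac{1+\eta}{1-\eta}\right]$. Since $\sum_i(\widehat\pi_i-\pi_i)=0$, the total variation equals $\sum_i(\pi_i-\widehat\pi_i)_+$. This is at most
\[
    \left(1-\frac{1-\eta}{1+\eta}\right)\sum_i\pi_i=\frac{2\eta}{1+\eta}.
\]
This is slightly more than $\eta/(1-\eta)$ for small $\eta$, so the constant needs attention, and this is the only delicate step.

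To sharpen it, bound more carefully: $\TV(\widehat\pi,\pi)=\frac12\sum_i\pi_i\bigl|\frac{(1+\eta_i)S}{\widehat S}-1\bigr|$. Let $\bar\eta:=\sum_i\pi_i\eta_i$, so $\widehat S/S=1+\bar\eta$. Each term is then $\pi_i|\eta_i-\bar\eta|/(1+\bar\eta)$, which gives
\[
    \TV(\widehat\pi,\pi)\leq\frac{\frac12\sum_i\pi_i|\eta_i-\bar\eta|}{1-\eta}.
\]
The numerator satisfies $\frac12\sum_i\pi_i|\eta_i-\bar\eta|\leq\frac12\cdot 2\eta=\eta$, because the $\eta_i$ lie in an interval of length $2\eta$ and $\bar\eta$ is their $\pi$-mean. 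Therefore $\TV(\widehat\pi,\pi)\leq\eta/(1-\eta)$.

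Plugging this into \cref{lem:tv-to-w2} gives $\WW_2(\widetilde Q_G,Q_G)\leq 2\Cnorm\sqrt{\eta/(1-\eta)}$. Summing the two terms completes the proof.
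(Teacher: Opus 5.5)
Your proof is correct. It differs from the paper's mainly in how the two error sources are combined.

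The paper invokes \cref{lem:weight-stability-unit} and then \cref{lem:mixture-error-unit}. The first gives the per-index bound $|\widehat\pi_i-\pi_i|\le 2\eta\pi_i/(1-\eta)$, which is exactly what your $\bar\eta$ computation produces, so your initial $2\eta/(1+\eta)$ detour can simply be dropped. The second builds a single coupling: a maximal coupling of the mixture indices $(I,\widehat I)$ together with optimal couplings of each pair $(\widehat P_{v_i},P_{v_i})$. This yields $\EE\|X-\widehat X\|^2\le\epsilon_{\mathrm{lin}}^2+4\Cnorm^2\eta/(1-\eta)$ in one step.

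You instead pass through the intermediate mixture $\sum_i\widehat\pi_iP_{v_i}$ and apply the triangle inequality. You handle component errors by convexity of $\WW_2^2$ under common weights, and weight errors by \cref{lem:tv-to-w2}.

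Each route buys something different. The joint coupling gives the slightly sharper intermediate bound $\sqrt{\epsilon_{\mathrm{lin}}^2+4\Cnorm^2\eta/(1-\eta)}$ before relaxing to the stated sum. Your route applies the TV-to-$\WW_2$ conversion only to mixtures of the exact tilts $P_{v_i}\ll P$. It therefore never uses the hypothesis $\supp(\widehat P_{v_i})\subseteq\CB_{d,2}(\Cnorm)$, which the paper's coupling needs on the event $I\neq\widehat I$.

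One further point: your remark that $\bar\eta$ is the $\pi$-mean of values in an interval of length $2\eta$ actually gives $\frac12\sum_i\pi_i|\eta_i-\bar\eta|\le\eta/2$. That is a factor-two improvement you did not need.
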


\begin{proof}
Apply \cref{lem:weight-stability-unit} to the weights $a_i=w_iZ_i$ and  
    $\widehat a_i=w_i\widehat Z_i.$
This gives $\TV(\widehat\pi,\pi)
    \leq
    \frac{\eta}{1-\eta}.$
The claim then follows from \cref{lem:mixture-error-unit}.
\end{proof}

\begin{lemma}[Lipschitz convex functions have bounded subgradients]
\label{lem:subgrad-bounded-unit}
Let $U\subseteq\RR^k$ be open and convex, and let $f:U\to\RR$ be convex and $L$-Lipschitz on
$U$. Then for every $u\in U$ and every $g\in\partial f(u)$,
\[
    \norm{g}_2\leq L.
\]
\end{lemma}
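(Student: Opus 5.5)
The plan is to apply the subgradient inequality at a point displaced from $u$ in the direction of $g$, using that $U$ is open. Fix $u\in U$ and $g\in\partial f(u)$. If $g=0$ there is nothing to prove, so assume $g\neq 0$. Since $U$ is open, there is $t>0$ such that the point $u_t:=u+t\,g/\norm{g}_2$ lies in $U$.

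The subgradient inequality at $u$, evaluated at $u_t$, gives
\[
    f(u_t)\geq f(u)+\langle g,u_t-u\rangle = f(u)+t\norm{g}_2 .
\]
The $L$-Lipschitz property on $U$ gives
\[
    f(u_t)-f(u)\leq L\norm{u_t-u}_2 = Lt .
\]
Combining the two displays yields $t\norm{g}_2\leq Lt$. Dividing by $t>0$ gives $\norm{g}_2\leq L$.

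There is no genuine obstacle here. The only point needing care is that the test point $u_t$ must lie in $U$, where both the subgradient inequality and the Lipschitz bound are asserted. This is exactly why the lemma, and its uses in \cref{lem:lse-envelope-unit}, assume an open convex neighborhood. At boundary points of a closed domain the conclusion can fail: subgradients there may have arbitrarily large norm.
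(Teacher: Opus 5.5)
Your proposal is correct and matches the paper's proof essentially step for step. Like the paper, you step from $u$ a small distance $t$ in the direction $g/\norm{g}_2$ (possible since $U$ is open), compare the subgradient lower bound $f(u)+t\norm{g}_2$ with the Lipschitz upper bound $f(u)+Lt$, and divide by $t$.
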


\begin{proof}
Fix $u\in U$ and $g\in\partial f(u)$. If $g=0$, the result is immediate. Otherwise, for $t>0$
small enough,
\[
    u+t\frac{g}{\norm{g}_2}\in U.
\]
By the subgradient inequality,
\[
    f\left(u+t\frac{g}{\norm{g}_2}\right)
    \geq
    f(u)+t\norm{g}_2.
\]
By $L$-Lipschitzness,
\[
    f\left(u+t\frac{g}{\norm{g}_2}\right)
    \leq
    f(u)+Lt.
\]
Thus $\norm{g}_2\leq L$.
\end{proof}

\begin{lemma}[Softmax bounds]
\label{lem:softmax-unit}
For $a_1,\dots,a_m\in\RR$,
\[
    \max_i a_i
    \leq
    \log\left(\sum_{i=1}^m e^{a_i}\right)
    \leq
    \max_i a_i+\log m.
\]
\end{lemma}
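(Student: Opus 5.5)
The statement is the softmax bound (\cref{lem:softmax-unit}). The plan is to sandwich the sum $S:=\sum_{i=1}^m e^{a_i}$ between $e^{a^\star}$ and $m e^{a^\star}$, where $a^\star:=\max_i a_i$, and then take logarithms.

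\textbf{Lower bound.} Let $i^\star$ be an index attaining the maximum. Every summand $e^{a_i}$ is strictly positive. Dropping all terms except the $i^\star$-th therefore gives $S\geq e^{a_{i^\star}}=e^{a^\star}$.

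\textbf{Upper bound.} Since the exponential is monotone increasing, $e^{a_i}\leq e^{a^\star}$ for every $i$. Summing the $m$ terms gives $S\leq m\,e^{a^\star}$.

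\textbf{Conclusion.} The logarithm is monotone increasing on $(0,\infty)$, and both bounds on $S$ are positive, so we may apply it to $e^{a^\star}\leq S\leq m e^{a^\star}$. This yields
\[
a^\star\leq \log S\leq a^\star+\log m,
\]
which is exactly the claim.

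There is no genuine obstacle. The only point needing care is that all the quantities inside the logarithm are positive, which holds automatically because exponentials are positive and $m\geq 1$.
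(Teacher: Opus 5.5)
Your proposal is correct and is essentially the paper's proof: it sets $M=\max_i a_i$, notes $e^M\leq\sum_{i=1}^m e^{a_i}\leq m e^M$, and takes logarithms. Your remark about positivity, which justifies applying the logarithm, is a fine addition that the paper leaves implicit.
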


\begin{proof}
Let $M=\max_i a_i$. Then
\[
    e^M
    \leq
    \sum_{i=1}^m e^{a_i}
    \leq
    me^M.
\]
Taking logarithms gives the result.
\end{proof}

\begin{lemma}[Stability of normalized weights]
\label{lem:weight-stability-unit}
Let $a_i>0$ and $\widehat a_i>0$ satisfy
\[
    \widehat a_i\in[(1-\eta)a_i,(1+\eta)a_i]
    \qquad
    \forall i\in[m],
\]
with $\eta\in(0,1)$. Let
\[
    \pi_i:=\frac{a_i}{\sum_j a_j},
    \qquad
    \widehat\pi_i:=\frac{\widehat a_i}{\sum_j \widehat a_j}.
\]
Then
\[
    \TV(\widehat\pi,\pi)
    \leq
    \frac{\eta}{1-\eta}.
\]
\end{lemma}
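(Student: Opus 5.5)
Write $S:=\sum_j a_j$ and $\widehat S:=\sum_j \widehat a_j$; by hypothesis $(1-\eta)S\le \widehat S\le(1+\eta)S$. The plan is to bound each coordinate difference $|\widehat\pi_i-\pi_i|$ directly and then sum over $i$. The key identity is
\[
    \widehat\pi_i-\pi_i
    =
    \frac{\widehat a_i-a_i}{\widehat S}
    +
    a_i\left(\frac{1}{\widehat S}-\frac{1}{S}\right)
    =
    \frac{\widehat a_i-a_i}{\widehat S}
    +
    \frac{a_i(S-\widehat S)}{S\widehat S},
\]
which isolates the error in the individual weight from the error in the normalizer.

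Apply the triangle inequality to each of the two terms and sum over $i\in[m]$. For the first term, $\sum_i|\widehat a_i-a_i|\le \eta S$. For the second, $\sum_i a_i\,|S-\widehat S|/(S\widehat S)=|S-\widehat S|/\widehat S\le \eta S/\widehat S$. Using $\widehat S\ge(1-\eta)S$ in both gives
\[
    \sum_i|\widehat\pi_i-\pi_i|\le \frac{2\eta}{1-\eta},
    \qquad\text{hence}\qquad
    \TV(\widehat\pi,\pi)=\tfrac12\sum_i|\widehat\pi_i-\pi_i|\le\frac{\eta}{1-\eta}.
\]

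The only thing to get right is the factor $\tfrac12$ in the definition of total variation, which is exactly what cancels the factor $2$ coming from the two terms. The estimate is otherwise routine, and positivity of the $a_i$ and $\widehat a_i$ guarantees that $S,\widehat S>0$, so the normalized weights are well defined.
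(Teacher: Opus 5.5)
Your proof is correct and is essentially the paper's argument. The paper writes $|\widehat\pi_i-\pi_i|=|\widehat a_iA-a_i\widehat A|/(A\widehat A)$ and bounds the numerator by $2\eta a_iA$ using the same split into weight error and normalizer error, then uses $\widehat A\ge(1-\eta)A$, sums over $i$, and halves; you do the same split as two separate fractions before summing.
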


\begin{proof}
Let $A=\sum_j a_j$ and $\widehat A=\sum_j \widehat a_j$. Then
\[
    (1-\eta)A\leq \widehat A\leq (1+\eta)A.
\]
For each $i$,
\[
    |\widehat\pi_i-\pi_i|
    =
    \left|
        \frac{\widehat a_i}{\widehat A}
        -
        \frac{a_i}{A}
    \right|
    =
    \frac{|\widehat a_iA-a_i\widehat A|}{A\widehat A}.
\]
Using
\[
    |\widehat a_i-a_i|\leq \eta a_i,
    \qquad
    |\widehat A-A|\leq \eta A,
\]
we get
\[
    |\widehat a_iA-a_i\widehat A|
    \leq
    2\eta a_iA.
\]
Thus
\[
    |\widehat\pi_i-\pi_i|
    \leq
    \frac{2\eta a_i}{(1-\eta)A}.
\]
Summing over $i$ and dividing by $2$ gives the total-variation bound.
\end{proof}

\begin{lemma}[Mixture sampling error]
\label{lem:mixture-error-unit}
Assume all component distributions are supported on $\CB_{d,2}(\Cnorm)$. Let
\[
    Q=\sum_{i=1}^m \pi_iQ_i,
    \qquad
    \widehat Q=\sum_{i=1}^m \widehat\pi_i\widehat Q_i.
\]
Suppose
\[
    \TV(\widehat\pi,\pi)\leq \alpha
\]
and
\[
    \WW_2(\widehat Q_i,Q_i)\leq \epsilon
    \qquad
    \forall i\in[m].
\]
Then
\[
    \WW_2(\widehat Q,Q)
    \leq
    \epsilon+2\Cnorm\sqrt{\alpha}.
\]
\end{lemma}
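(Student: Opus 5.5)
The plan is to build an explicit coupling of $\widehat Q$ and $Q$ in two layers. The outer layer couples the mixture indices; the inner layer couples the components. Because the indices can disagree, a triangle inequality through an intermediate mixture is cleaner than a single coupling. Define the intermediate law
\[
    \widetilde Q := \sum_{i=1}^m \widehat\pi_i Q_i ,
\]
which uses the perturbed weights but the exact components. Then
\[
    \WW_2(\widehat Q,Q)\leq \WW_2(\widehat Q,\widetilde Q)+\WW_2(\widetilde Q,Q),
\]
and I would bound the two terms by $\epsilon$ and $2\Cnorm\sqrt{\alpha}$ respectively.

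\textbf{First term.} For each $i$, take an optimal coupling $\gamma_i\in\Pi(\widehat Q_i,Q_i)$; it exists because all laws are supported on a compact ball. Form the mixture coupling $\gamma:=\sum_i\widehat\pi_i\gamma_i$. Its marginals are exactly $\widehat Q$ and $\widetilde Q$, and its cost is
\[
    \sum_i\widehat\pi_i\,\WW_2^2(\widehat Q_i,Q_i)\leq \epsilon^2 .
\]
Hence $\WW_2(\widehat Q,\widetilde Q)\leq \epsilon$. Equivalently, this is joint convexity of $\WW_2^2$ under a common mixing measure.

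\textbf{Second term.} Here both mixtures share the components $Q_i$, so I would bound their total variation and then invoke \cref{lem:tv-to-w2}. Writing the difference of the two measures as $\sum_i(\widehat\pi_i-\pi_i)Q_i$ and using the triangle inequality on total variation norms gives
\[
    \TV(\widetilde Q,Q)\leq \tfrac12\sum_i|\widehat\pi_i-\pi_i| = \TV(\widehat\pi,\pi)\leq\alpha .
\]
Both $\widetilde Q$ and $Q$ are supported on $\CB_{d,2}(\Cnorm)$, since they are mixtures of laws supported there. Applying \cref{lem:tv-to-w2} with $C=\Cnorm$ yields $\WW_2(\widetilde Q,Q)\leq 2\Cnorm\sqrt{\alpha}$.

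Combining the two bounds gives the claim. The only mildly delicate point is checking that the mixture of couplings has the correct marginals and that the total variation bound for mixtures holds. Both are routine linearity arguments, so I expect no real obstacle.
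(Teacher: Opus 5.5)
Your proof is correct, but it is organized differently from the paper's. The paper builds a single coupling directly. It couples the indices $(I,\widehat I)$ maximally so that $\Pr[I\neq\widehat I]\leq\alpha$, and couples each pair $(X_i,\widehat X_i)$ optimally, implicitly independently of the indices. It then sets $X=X_I$ and $\widehat X=\widehat X_{\widehat I}$, and splits $\EE\norm{X-\widehat X}_2^2$ according to whether $I=\widehat I$. This gives $\WW_2^2(\widehat Q,Q)\leq\epsilon^2+4\Cnorm^2\alpha$.

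You instead pass through the intermediate mixture $\widetilde Q=\sum_i\widehat\pi_iQ_i$ and use the triangle inequality for $\WW_2$. You handle the component error by convexity of $\WW_2^2$ under a common mixing measure, and the weight error by a total-variation bound plus \cref{lem:tv-to-w2}.

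What each route buys:
\begin{itemize}
\item The paper's one-shot coupling gives the slightly sharper bound $\sqrt{\epsilon^2+4\Cnorm^2\alpha}\leq\epsilon+2\Cnorm\sqrt{\alpha}$, and it avoids the gluing lemma behind the $\WW_2$ triangle inequality.
\item However, on the event $\{I\neq\widehat I\}$ it uses that both $Q_i$ and $\widehat Q_i$ lie in $\CB_{d,2}(\Cnorm)$.
\item Your decomposition needs only the exact components $Q_i$ to be supported in the ball; your first term uses no support assumption at all. It also reuses \cref{lem:tv-to-w2} as a black box, at the cost of adding the two errors linearly, which is exactly the stated bound.
\end{itemize}
The two arguments are closely related. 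If you analyze the paper's coupling with Minkowski's inequality instead of splitting on events, you recover exactly your two terms, because $X_{\widehat I}$ has law $\widetilde Q$.
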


\begin{proof}
Couple the mixture indices $(I,\widehat I)$ by a maximal coupling, so that
\[
    \Pr[I\neq \widehat I]=\TV(\pi,\widehat\pi)\leq \alpha.
\]
For each $i$, couple $X_i\sim Q_i$ and $\widehat X_i\sim\widehat Q_i$ so that
\[
    \EE\norm{X_i-\widehat X_i}_2^2\leq \epsilon^2.
\]
Set
\[
    X:=X_I,
    \qquad
    \widehat X:=\widehat X_{\widehat I}.
\]
On the event $I=\widehat I$, the squared distance has expectation at most $\epsilon^2$. On the
event $I\neq \widehat I$, both variables lie in $\CB_{d,2}(\Cnorm)$, so
\[
    \norm{X-\widehat X}_2^2\leq 4\Cnorm^2.
\]
Therefore
\[
    \EE\norm{X-\widehat X}_2^2
    \leq
    \epsilon^2+4\Cnorm^2\alpha.
\]
Taking square roots gives the claim.
\end{proof}

\begin{lemma}[Bounded support converts $W_1$ to $W_2$]
\label{lem:w1-to-w2-unit}
Suppose $P$ and $Q$ are supported on $\CB_{d,2}(\Cnorm)$. Then
\[
    \WW_2(P,Q)
    \leq
    \sqrt{2\Cnorm\,\WW_1(P,Q)}.
\]
\end{lemma}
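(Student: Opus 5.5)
The approach is to compare $\WW_2^2$ to $\WW_1$ through an optimal $\WW_1$ coupling, using the bounded support to control one factor of the squared distance. By compactness of $\CB_{d,2}(\Cnorm)$, an optimal coupling $\gamma$ of $P$ and $Q$ for the cost $\norm{x-y}_2$ exists. If one prefers to avoid existence, we can instead take a coupling attaining $\WW_1(P,Q)+\epsilon'$ and let $\epsilon'\to 0$ at the end.

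Let $(X,Y)\sim\gamma$. Both $X$ and $Y$ lie in $\CB_{d,2}(\Cnorm)$ almost surely. Hence $\norm{X-Y}_2\le 2\Cnorm$, which gives the pointwise bound
\[
    \norm{X-Y}_2^2\le 2\Cnorm\,\norm{X-Y}_2 .
\]
Taking expectations, we obtain
\[
    \WW_2^2(P,Q)\le \EE\norm{X-Y}_2^2\le 2\Cnorm\,\EE\norm{X-Y}_2=2\Cnorm\,\WW_1(P,Q).
\]
The first inequality holds because $\gamma$ is some coupling of $P$ and $Q$, and $\WW_2^2$ is an infimum over all such couplings. Taking square roots yields the claim.

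No step is a real obstacle. The only point needing care is measure-theoretic: the existence of an optimal coupling, which follows from compactness and tightness of $\Pi(P,Q)$ together with lower semicontinuity of the cost. The approximation argument with $\epsilon'\to 0$ sidesteps this entirely.
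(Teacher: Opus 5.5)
Your proof is correct and is essentially the paper's argument: the pointwise bound $\norm{X-Y}_2^2\le 2\Cnorm\norm{X-Y}_2$ under a coupling, then taking expectations. The paper applies this bound to an arbitrary coupling and takes the infimum on both sides, which avoids your existence and $\epsilon'$-approximation discussion altogether.
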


\begin{proof}
For any coupling $(X,Y)$ of $P$ and $Q$, since both variables lie in $\CB_{d,2}(\Cnorm)$,
\[
    \norm{X-Y}_2\leq 2\Cnorm.
\]
Hence
\[
    \norm{X-Y}_2^2
    \leq
    2\Cnorm\norm{X-Y}_2.
\]
Taking expectations and then the infimum over couplings gives
\[
    \WW_2^2(P,Q)
    \leq
    2\Cnorm\,\WW_1(P,Q).
\]
Taking square roots proves the claim.
\end{proof}

\begin{lemma}[Stability of the correction step]
\label{lem:rejection-stability-unit}
Let $Q$ and $\widehat Q$ be probability measures supported on $\CB_{d,2}(\Cnorm)$. Let
$a:\CB_{d,2}(\Cnorm)\to[0,1]$ satisfy
\[
    a(x)\geq a_0>0
    \qquad
    \forall x\in\CB_{d,2}(\Cnorm),
\]
and suppose $a$ is $L_a$-Lipschitz. Define the reweighted laws
\[
    \mathcal R_a(Q)(dx)
    :=
    \frac{a(x)}{\EE_Q[a(X)]}Q(dx),
    \qquad
    \mathcal R_a(\widehat Q)(dx)
    :=
    \frac{a(x)}{\EE_{\widehat Q}[a(X)]}\widehat Q(dx).
\]
Then
\[
    \WW_2(\mathcal R_a(Q),\mathcal R_a(\widehat Q))
    \leq
    \sqrt{
        \frac{2\Cnorm(1+2\Cnorm L_a)}{a_0}
        \WW_2(Q,\widehat Q)
    }.
\]
\end{lemma}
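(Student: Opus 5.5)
The plan is to get a $\WW_1$ bound between the reweighted laws by Kantorovich--Rubinstein duality, and then convert it to $\WW_2$ with \cref{lem:w1-to-w2-unit}. Both $\mathcal R_a(Q)$ and $\mathcal R_a(\widehat Q)$ are supported on $\CB_{d,2}(\Cnorm)$, because reweighting does not enlarge the support. So \cref{lem:w1-to-w2-unit} gives
\[
\WW_2^2(\mathcal R_a(Q),\mathcal R_a(\widehat Q))\le 2\Cnorm\,\WW_1(\mathcal R_a(Q),\mathcal R_a(\widehat Q)).
\]
It therefore suffices to show
\[
\WW_1(\mathcal R_a(Q),\mathcal R_a(\widehat Q))\le \frac{1+2\Cnorm L_a}{a_0}\,\WW_1(Q,\widehat Q),
\]
and then use $\WW_1\le\WW_2$.

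For the $\WW_1$ bound, fix a test function $\varphi$ that is $1$-Lipschitz on the ball. Adding a constant does not change the difference of expectations, so we may normalize $\varphi(0)=0$, which gives $|\varphi|\le \Cnorm$ on the ball. Write $Z:=\EE_Q[a]$ and $\widehat Z:=\EE_{\widehat Q}[a]$; both are at least $a_0$ because $a\ge a_0$ pointwise. Split the difference of expectations as
\[
\frac{\EE_Q[a\varphi]}{Z}-\frac{\EE_{\widehat Q}[a\varphi]}{\widehat Z}
=\frac{\EE_Q[a\varphi]-\EE_{\widehat Q}[a\varphi]}{Z}
+\frac{\EE_{\widehat Q}[a\varphi]}{\widehat Z}\cdot\frac{\widehat Z-Z}{Z}.
\]

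We bound the two terms separately.

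\emph{First term.} The product $a\varphi$ is $(1+\Cnorm L_a)$-Lipschitz on the ball, since $|a|\le 1$, $\varphi$ is $1$-Lipschitz, $|\varphi|\le\Cnorm$ and $a$ is $L_a$-Lipschitz. By duality the numerator is at most $(1+\Cnorm L_a)\WW_1(Q,\widehat Q)$, and the denominator is at least $a_0$.

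\emph{Second term.} The first factor is at most $\Cnorm$ in absolute value, because it is an average of $\varphi$ under $\mathcal R_a(\widehat Q)$. For the second factor, duality applied to $a$ gives $|\widehat Z-Z|\le L_a\WW_1(Q,\widehat Q)$, and $Z\ge a_0$. So this term is at most $\Cnorm L_a\WW_1(Q,\widehat Q)/a_0$.

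Adding the two bounds gives $(1+2\Cnorm L_a)\WW_1(Q,\widehat Q)/a_0$. Taking the supremum over $\varphi$ yields the claimed $\WW_1$ bound, and chaining with $\WW_1\le \WW_2$ and the conversion above gives the stated inequality.

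There is no real obstacle. The only points needing care are the normalization $\varphi(0)=0$, which is what makes $|\varphi|\le\Cnorm$ available, and the bookkeeping of Lipschitz constants; Kantorovich--Rubinstein is applied on the compact ball, where it holds without issue.
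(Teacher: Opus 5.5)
Your proposal is correct and follows essentially the same route as the paper: Kantorovich--Rubinstein duality with test functions normalized so that $\varphi(0)=0$, and the same two-term split. Your second term is exactly the paper's $|\EE_{\widehat Q}[a\varphi]|\,|1/Z_Q-1/Z_{\widehat Q}|$, and the same Lipschitz bookkeeping yields $(1+2\Cnorm L_a)/a_0$, after which the paper also concludes via \cref{lem:w1-to-w2-unit} and $\WW_1\le\WW_2$.
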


\begin{proof}
We first prove a Wasserstein-$1$ bound. By Kantorovich duality, it suffices to consider
$1$-Lipschitz functions $\varphi$ with $\varphi(0)=0$. On $\CB_{d,2}(\Cnorm)$, such functions
satisfy $|\varphi|\leq \Cnorm$.

Let
\[
    Z_Q:=\EE_Q[a(X)],
    \qquad
    Z_{\widehat Q}:=\EE_{\widehat Q}[a(X)].
\]
Since $a\geq a_0$, both normalizers are at least $a_0$. For any such $\varphi$,
\begin{align*}
&
\left|
    \EE_{\mathcal R_a(Q)}[\varphi]
    -
    \EE_{\mathcal R_a(\widehat Q)}[\varphi]
\right|
\\
&\qquad
=
\left|
    \frac{\EE_Q[a\varphi]}{Z_Q}
    -
    \frac{\EE_{\widehat Q}[a\varphi]}{Z_{\widehat Q}}
\right|
\\
&\qquad
\leq
\frac{
    |\EE_Q[a\varphi]-\EE_{\widehat Q}[a\varphi]|
}{Z_Q}
+
|\EE_{\widehat Q}[a\varphi]|
\left|
    \frac{1}{Z_Q}-\frac{1}{Z_{\widehat Q}}
\right|.
\end{align*}
The function $a\varphi$ has Lipschitz constant at most
\[
    \Lip(a\varphi)
    \leq
    \Lip(a)\norm{\varphi}_\infty+\norm{a}_\infty\Lip(\varphi)
    \leq
    \Cnorm L_a+1.
\]
Thus
\[
    |\EE_Q[a\varphi]-\EE_{\widehat Q}[a\varphi]|
    \leq
    (1+\Cnorm L_a)\WW_1(Q,\widehat Q).
\]
Also,
\[
    |\EE_{\widehat Q}[a\varphi]|
    \leq
    \Cnorm Z_{\widehat Q},
\]
and
\[
    |Z_Q-Z_{\widehat Q}|
    =
    |\EE_Q[a]-\EE_{\widehat Q}[a]|
    \leq
    L_a\WW_1(Q,\widehat Q).
\]
Therefore
\[
    \left|
        \frac{1}{Z_Q}-\frac{1}{Z_{\widehat Q}}
    \right|
    =
    \frac{|Z_Q-Z_{\widehat Q}|}{Z_QZ_{\widehat Q}}
    \leq
    \frac{L_a}{a_0 Z_{\widehat Q}}\WW_1(Q,\widehat Q).
\]
Combining these bounds gives
\[
    \left|
        \EE_{\mathcal R_a(Q)}[\varphi]
        -
        \EE_{\mathcal R_a(\widehat Q)}[\varphi]
    \right|
    \leq
    \frac{1+2\Cnorm L_a}{a_0}
    \WW_1(Q,\widehat Q).
\]
Taking the supremum over $\varphi$ gives
\[
    \WW_1(\mathcal R_a(Q),\mathcal R_a(\widehat Q))
    \leq
    \frac{1+2\Cnorm L_a}{a_0}
    \WW_1(Q,\widehat Q).
\]
Since all measures are supported on $\CB_{d,2}(\Cnorm)$, \cref{lem:w1-to-w2-unit} implies
\[
    \WW_2(\mathcal R_a(Q),\mathcal R_a(\widehat Q))
    \leq
    \sqrt{
        2\Cnorm\,
        \WW_1(\mathcal R_a(Q),\mathcal R_a(\widehat Q))
    }.
\]
Finally, $\WW_1(Q,\widehat Q)\leq \WW_2(Q,\widehat Q)$, which proves the claim.
\end{proof}

\section{Proofs and supporting lemmas for \cref{sec:wasserstein}} 
\label{a:wasserstein}

\begin{proof}[Proof of \cref{cor:w2-map}]

Let \(\Gamma(P)\) denote the set of probability measures \(\gamma\) on
\(\mathcal X\times \mathcal X\) whose second marginal is \(P\). We write
\((X,Y)\sim\gamma\), where \(Y\sim P\).
Then, we have: 
\[
    \sup_{Q}\EE_Q[r] - \lambda\WW_2^2(Q, P)
    =
    \sup_{\gamma\in\Gamma(P)}
    \EE_\gamma[r(X)] - \lambda \EE_\gamma \norm{X-Y}^2.
\]
By disintegration, any \(\gamma\in\Gamma(P)\) can be written as
\[
    \gamma(dx,dy)=K_y(dx)P(dy),
\]
where \(y\mapsto K_y\) is a stochastic kernel on \(\mathcal X\). Therefore the problem is
equivalently
\[
    \sup_K
    \left\{
    \int_{\mathcal X}\int_{\mathcal X} r(x)\,K_y(dx)\,P(dy)
    - \lambda
    \int_{\mathcal X}\int_{\mathcal X}\|x-y\|^2\,K_y(dx)\,P(dy)
    \right\}.
\]

For fixed \(y\), write
\[
    h_y(x):=r(x)-\lambda\|x-y\|^2.
\]
Since \(K_y\) ranges over all probability measures on \(\mathcal X\), we have
\[
    \int h_y(x)\,K_y(dx)\leq \sup_{x\in\mathcal X}h_y(x)
\]
for every \(K_y\). Conversely, choosing \(K_y=\delta_{x^\star}\) for a maximizer
\(x^\star\in\argmax_x h_y(x)\) achieves equality. If the maximum is not attained, the same
identity holds with supremum by choosing \(\varepsilon\)-maximizers. Hence
\[
    \sup_{K_y}
    \int_{\mathcal X}
    \left(r(x)-\lambda\|x-y\|^2\right)K_y(dx)
    =
    \sup_{x\in\mathcal X}
    \left\{r(x)-\lambda\|x-y\|^2\right\}.
\]

Thus
\[
    \sup_K
    \int\!\!\int
    \left(r(x)-\lambda\|x-y\|^2\right)K_y(dx)P(dy)
    =
    \EE_{Y\sim P}
    \left[
        \sup_{x\in\mathcal X}
        \left\{r(x)-\lambda\|x-Y\|^2\right\}
    \right]
\]
and hence
\[\sup_{Q}\EE_Q[r] - \lambda\WW_2^2(Q, P) = \EE_{Y\sim P}
    \left[
        \sup_{x\in\mathcal X}
        \left\{r(x)-\lambda\|x-Y\|^2\right\}\right].\]
Let $Y \sim P$ and $X := T_\lambda(Y)$, so that $X \sim Q_\lambda$. Then
\begin{align*}
\EE_{Q_\lambda}[r] - \lambda\WW_2^2(Q_\lambda,P)
&\geq \EE[r(X)] - \lambda \EE[\|X-Y\|^2] \\ 
&= \EE_{Y \sim P}[r(T_\lambda(Y)) - \lambda \|T_\lambda(Y) - Y\|^2] \\
&= \EE_{Y \sim P}\left[\sup_{x \in \mathcal X}\{r(x) - \lambda \|x-Y\|^2\}\right].
\end{align*}
Thus, $Q_\lambda$ achieves the supremum.
\end{proof}

\begin{lemma}[Using an approximate proximal oracle]
\label{cor:concave-wasserstein-sampler}
Suppose $\widehat T_\lambda$ is an approximate proximal oracle satisfying
\[
    \sup_{y\in\mathcal X}
    \|\widehat T_\lambda(y)-T_\lambda(y)\|
    \le \alpha.
\]
Then the output law $\widehat Q_\lambda := (\widehat T_\lambda)_\#P$ satisfies
\[
   \WW_2(\widehat Q_\lambda,Q_\lambda)\le \alpha.
\]
\end{lemma}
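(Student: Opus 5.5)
The plan is a direct coupling argument. Since $\WW_2$ is an infimum over couplings, it suffices to exhibit one coupling of $\widehat Q_\lambda$ and $Q_\lambda$ whose expected squared distance is at most $\alpha^2$. The natural choice is the synchronous coupling: draw a single $Y\sim P$ and set $X:=T_\lambda(Y)$ and $\widehat X:=\widehat T_\lambda(Y)$. By definition of pushforward, $X\sim (T_\lambda)_\#P=Q_\lambda$ and $\widehat X\sim(\widehat T_\lambda)_\#P=\widehat Q_\lambda$, so $(X,\widehat X)$ is a valid coupling, i.e. its law lies in $\Pi(Q_\lambda,\widehat Q_\lambda)$.

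Next, use that $P$ is supported on $\mathcal X$, so $Y\in\mathcal X$ almost surely. The uniform hypothesis $\sup_{y\in\mathcal X}\|\widehat T_\lambda(y)-T_\lambda(y)\|\le\alpha$ then gives $\|\widehat X-X\|\le\alpha$ almost surely, and hence $\EE\|\widehat X-X\|_2^2\le\alpha^2$. Taking the infimum over couplings and then square roots yields $\WW_2(\widehat Q_\lambda,Q_\lambda)\le\alpha$.

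There is no real obstacle here. The only point needing care is measurability: $\widehat T_\lambda$ must be measurable so that $\widehat Q_\lambda$ is a well-defined law, and $T_\lambda$ must be a measurable selection of the argmax. The latter is part of \cref{def:proximal-transport-oracle}, and the former I would state as an implicit assumption on the oracle. Note also that the bound holds for whichever selection $T_\lambda$ the hypothesis refers to, so set-valuedness of the argmax causes no issue.
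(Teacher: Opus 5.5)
Your proposal is correct and is essentially the paper's proof: both use the synchronous coupling $X=T_\lambda(Y)$, $\widehat X=\widehat T_\lambda(Y)$ with a single $Y\sim P$, bound $\|\widehat X-X\|\le\alpha$ almost surely, and conclude $\WW_2^2(\widehat Q_\lambda,Q_\lambda)\le\EE\|\widehat X-X\|^2\le\alpha^2$. Your explicit remarks that $Y\in\mathcal X$ almost surely (so the uniform bound applies) and about measurability of $\widehat T_\lambda$ are minor clarifications the paper leaves implicit.
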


\begin{proof}
Let $Y\sim P$, and let
\[
    X=T_\lambda(Y),
    \qquad
    \widehat X=\widehat T_\lambda(Y).
\]
Then $X\sim Q_\lambda$ and $\widehat X\sim \widehat Q_\lambda$, and
\[
    \|\widehat X-X\|
    =
    \|\widehat T_\lambda(Y)-T_\lambda(Y)\|
    \le \alpha
\]
almost surely. Therefore
\[
   \WW_2^2(\widehat Q_\lambda,Q_\lambda)
    \le
    \EE\|\widehat X-X\|^2
    \le
    \alpha^2,
\]
which proves the claim.
\end{proof}

\begin{proof}[Proof of Lemma \ref{lem:wasserstein-convex-sampling-hard}]
We reduce from MAX-CUT. Let \(G=([d],E)\) be an unweighted graph, and let
\(L_G\succeq 0\) be its graph Laplacian. Set
\[
    \mathcal X=[-1,1]^d,
    \qquad
    P=\delta_0,
\]
and define the convex quadratic reward
\[
    r_G(x):=x^\top(L_G+I)x.
\]
The base distribution \(P=\delta_0\) has a trivial diffusion-score oracle: its noised distribution is
Gaussian, so the score can be evaluated explicitly. For any \(Q\) supported on \(\mathcal X\),
\[
   \WW_2^2(Q,\delta_0)=\EE_Q\|X\|^2\leq d.
\]
Thus the Wasserstein constraint with radius \(\sqrt d\) is automatically satisfied, and the optimization problem is
equivalent to
\[
    \sup_{Q\in\Delta(\mathcal X)} \EE_Q[r_G(X)].
\]
Since the objective is linear in \(Q\), every optimizer is supported on the maximizers of \(r_G\)
over \(\mathcal X\).

We now characterize these maximizers. For \(s\in\{\pm1\}^d\),
\[
    s^\top L_G s
    =
    \sum_{(i,j)\in E}(s_i-s_j)^2.
\]
Moreover,
\[
    r_G(s)=s^\top L_Gs+d.
\]
Since \(x\mapsto x^\top(L_G+I)x\) is strictly convex in each coordinate when the others are fixed,
any maximizer over the box \([-1,1]^d\) lies at a vertex \(s\in\{\pm1\}^d\). Therefore the
maximizers of \(r_G\) over \(\mathcal X\) are exactly the vertices corresponding to maximum cuts
of \(G\).

Now suppose we have a sampler producing \(\widehat Q\) with
\(W_2(\widehat Q,Q^\star)\leq 1/4\), where \(Q^\star\) is an optimal Wasserstein tilt. Couple
\(\widehat X\sim\widehat Q\) and \(X^\star\sim Q^\star\) so that
\[
    \EE\|\widehat X-X^\star\|^2\leq \frac{1}{16}.
\]
Since \(Q^\star\) is supported on maximum-cut vertices, \(X^\star\in\{\pm1\}^d\) almost surely.
By Markov's inequality,
\[
    \Pr\!\left[\|\widehat X-X^\star\|\geq 1\right]\leq \frac{1}{16}.
\]
On the event \(\|\widehat X-X^\star\|<1\), coordinatewise rounding recovers the same vertex:
\[
    \operatorname{sign}(\widehat X)=X^\star.
\]
Hence, with probability at least \(15/16\), rounding the sampler output gives a maximum cut of
\(G\).

Thus, given a MAX-CUT decision instance \((G,K)\), we run the sampler, round the output to
\(\operatorname{sign}(\widehat X)\), and check whether the resulting cut has value at least \(K\).
If the instance is a NO instance, no rounded vector can have cut value at least \(K\). If it is a YES
instance, the procedure outputs such a cut with probability at least \(15/16\). This gives a
randomized polynomial-time algorithm for MAX-CUT, and hence implies
\(\mathrm{NP}\subseteq\mathrm{BPP}\).
\end{proof}

\begin{proof}[Proof of \cref{thm:lowrank-wasserstein}]
We analyze \cref{alg:lowrank-wasserstein}. Let \(\widehat P\) denote the law of \(\widehat Y\). By
\cref{thm:chen-base-sampling}, and since projection onto
\(\CB_{d,2}(\Cnorm)\) is nonexpansive, with probability at least \(1-\delta\),
\[
   \WW_2(\widehat P,P)\le \epsilon_P,
    \qquad
    \supp(\widehat P)\subseteq \CB_{d,2}(\Cnorm).
\]
We condition on this event.

Fix \(y\in\CB_{d,2}(\Cnorm)\). Write
\[
    A=U\Sigma V_1^\top,
\]
where \(U\in\RR^{k\times r_A}\), \(\Sigma\in\RR^{r_A\times r_A}\), and
\(V_1\in\RR^{d\times r_A}\) has orthonormal columns. Complete \(V_1\) to an orthogonal basis by
\(V_0\in\RR^{d\times(d-r_A)}\). Every \(x\in\RR^d\) can be written uniquely as
\[
    x=V_1u+V_0w,
    \qquad
    u\in\RR^{r_A},
    \quad
    w\in\RR^{d-r_A}.
\]
Moreover,
\[
    Ax=U\Sigma u.
\]
The constraint \(x\in\CB_{d,2}(\Cnorm)\) is equivalent to
\[
    \|u\|_2^2+\|w\|_2^2\le \Cnorm^2.
\]
Writing
\[
    u_y:=V_1^\top y,
    \qquad
    w_y:=V_0^\top y,
\]
we have
\[
    \|x-y\|_2^2
    =
    \|u-u_y\|_2^2+\|w-w_y\|_2^2.
\]
For fixed \(u\), the reward \(f(Ax)=f(U\Sigma u)\) does not depend on \(w\). Thus, among all
\(w\) satisfying
\[
    \|w\|_2\le \sqrt{\Cnorm^2-\|u\|_2^2},
\]
the optimal choice is the projection of \(w_y\) onto this ball. Therefore the pointwise proximal
value
\[
    V(y)
    :=
    \sup_{x\in\CB_{d,2}(\Cnorm)}
    \left\{
        f(Ax)-\lambda\|x-y\|_2^2
    \right\}
\]
can be written as
\[
    V(y)
    =
    \sup_{\|u\|_2\le \Cnorm}
    \Phi_y(u),
\]
where
\[
    \Phi_y(u)
    :=
    f(U\Sigma u)-\lambda
    \left[
        \|u-u_y\|_2^2+
        \bigl(\|w_y\|_2-\rho(u)\bigr)_+^2
    \right],
\]
and
\[
    \rho(u):=\sqrt{\Cnorm^2-\|u\|_2^2}.
\]
The algorithm computes exactly this reduced objective on a net, and returns the corresponding
point \(x(u)\in\CB_{d,2}(\Cnorm)\).

We next bound the modulus of continuity of \(\Phi_y\). Since \(f\) is \(L\)-Lipschitz on
\(\CB_{k,2}(S\Cnorm)\) and \(\|U\Sigma\|_{\op}=\|A\|_{\op}=S\), the map
\(u\mapsto f(U\Sigma u)\) is \(LS\)-Lipschitz. The term
\[
    u\mapsto \|u-u_y\|_2^2
\]
is \(4\Cnorm\)-Lipschitz on \(\CB_{r_A,2}(\Cnorm)\). It remains to control the radial term
\[
    \psi_y(u):=\bigl(\|w_y\|_2-\rho(u)\bigr)_+^2.
\]
For \(u,u'\in\CB_{r_A,2}(\Cnorm)\),
\[
    |\rho(u)-\rho(u')|
    \le
    \sqrt{\left|\rho(u)^2-\rho(u')^2\right|}
    =
    \sqrt{\left|\|u'\|_2^2-\|u\|_2^2\right|}
    \le
    \sqrt{2\Cnorm\|u-u'\|_2}.
\]
The function \(t\mapsto(\|w_y\|_2-t)_+^2\) is \(2\Cnorm\)-Lipschitz on
\([0,\Cnorm]\). Hence
\[
    |\psi_y(u)-\psi_y(u')|
    \le
    2\Cnorm\sqrt{2\Cnorm\|u-u'\|_2}.
\]
Combining the preceding displays, for all \(u,u'\in\CB_{r_A,2}(\Cnorm)\),
\[
    |\Phi_y(u)-\Phi_y(u')|
    \le
    (LS+4\lambda\Cnorm)\|u-u'\|_2
    +
    2\lambda\Cnorm\sqrt{2\Cnorm\|u-u'\|_2}.
\]

Let \(u^\star\) maximize \(\Phi_y\) over \(\CB_{r_A,2}(\Cnorm)\). Since \(\mathcal N_h\) is an
\(h\)-net of \(\CB_{r_A,2}(\Cnorm)\), there exists \(\bar u\in\mathcal N_h\) such that
\[
    \|\bar u-u^\star\|_2\le h.
\]
By the choice of \(h\),
\[
    (LS+4\lambda\Cnorm)h\le \epsilon/6,
\]
and
\[
    2\lambda\Cnorm\sqrt{2\Cnorm h}\le \epsilon/6.
\]
Therefore
\[
    \Phi_y(\bar u)\ge \Phi_y(u^\star)-\epsilon/3=V(y)-\epsilon/3.
\]
Since the algorithm chooses a maximizer \(\widehat u\) over the grid,
\[
    \Phi_y(\widehat u)\ge V(y)-\epsilon/3.
\]
Applying this with \(y=\widehat Y\), and using the fact that the returned point \(\widehat X\)
satisfies
\[
    f(A\widehat X)-\lambda\|\widehat X-\widehat Y\|_2^2
    =
    \Phi_{\widehat Y}(\widehat u),
\]
we obtain
\[
    \EE
    \left[
        f(A\widehat X)-\lambda\|\widehat X-\widehat Y\|_2^2
    \right]
    \ge
    \EE_{\widehat Y\sim\widehat P}[V(\widehat Y)]-\epsilon/3.
\]

We now transfer from \(\widehat P\) back to \(P\). For \(y,y'\in\CB_{d,2}(\Cnorm)\), let
\(x_y\) be a maximizer defining \(V(y)\). Then
\[
\begin{aligned}
    V(y)-V(y')
    &\le
    f(Ax_y)-\lambda\|x_y-y\|_2^2
    -
    \left(
        f(Ax_y)-\lambda\|x_y-y'\|_2^2
    \right) \\
    &=
    \lambda
    \left(
        \|x_y-y'\|_2^2-\|x_y-y\|_2^2
    \right).
\end{aligned}
\]
Since \(x_y,y,y'\in\CB_{d,2}(\Cnorm)\),
\[
    |V(y)-V(y')|
    \le
    4\lambda\Cnorm\|y-y'\|_2.
\]
Thus \(V\) is \(4\lambda\Cnorm\)-Lipschitz on the support region. Since
\[
    W_1(\widehat P,P)\le\WW_2(\widehat P,P)\le \epsilon_P,
\]
we have
\[
    \EE_{\widehat P}[V]-\EE_P[V]\ge -4\lambda\Cnorm\epsilon_P.
\]
By the pointwise representation of the squared-Wasserstein penalized objective,
\[
    \OPT_\lambda(P)=\EE_{Y\sim P}[V(Y)].
\]
Therefore
\[
    \EE
    \left[
        f(A\widehat X)-\lambda\|\widehat X-\widehat Y\|_2^2
    \right]
    \ge
    \OPT_\lambda(P)-\epsilon/3-4\lambda\Cnorm\epsilon_P.
\]

Let \((Y,\widehat Y)\) be a coupling of \(P\) and \(\widehat P\) satisfying
\[
    \left(\EE\|Y-\widehat Y\|_2^2\right)^{1/2}\le \epsilon_P.
\]
Using the same conditional randomness that maps \(\widehat Y\) to \(\widehat X\), this gives a
coupling between \(Y\sim P\) and \(\widehat X\sim\widehat Q\). Thus
\[
   \WW_2^2(\widehat Q,P)\le \EE\|\widehat X-Y\|_2^2.
\]
Because \(\widehat X,\widehat Y,Y\in\CB_{d,2}(\Cnorm)\),
\[
\begin{aligned}
&
\left|
    \|\widehat X-Y\|_2^2
    -
    \|\widehat X-\widehat Y\|_2^2
\right| \\
&\qquad\le
    4\Cnorm\|Y-\widehat Y\|_2.
\end{aligned}
\]

Consequently,
\[
\begin{aligned}
    \EE_{X\sim\widehat Q}[f(AX)]-\lambda\WW_2^2(\widehat Q,P)
    &\ge
    \EE
    \left[
        f(A\widehat X)-\lambda\|\widehat X-Y\|_2^2
    \right] \\
    &\ge
    \EE
    \left[
        f(A\widehat X)-\lambda\|\widehat X-\widehat Y\|_2^2
    \right]
    -
    4\lambda\Cnorm\epsilon_P \\
    &\ge
    \OPT_\lambda(P)-\epsilon/3-8\lambda\Cnorm\epsilon_P.
\end{aligned}
\]
By the choice
\[
    \epsilon_P=\frac{\epsilon}{24\lambda\Cnorm},
\]
we conclude
\[
    \EE_{X\sim\widehat Q}[f(AX)]
    -
    \lambda\WW_2^2(\widehat Q,P)
    \ge
    \OPT_\lambda(P)-\epsilon.
\]

It remains to bound the runtime. The grid size satisfies
\[
    |\mathcal N_h|
    \le
    \left(
        1+\frac{2\Cnorm}{h}
    \right)^{r_A}
    =
    M_W.
\]
Each grid point requires one value-oracle query to \(f\) and polynomial-time arithmetic in
\(d\) and \(k\). The call to \textsc{BaseSampler} runs in time
\[
    \poly(d,\Cnorm,\epsilon_P^{-1},\log(1/\delta)).
\]
Since
\[
    \epsilon_P^{-1}
    =
    \frac{24\lambda\Cnorm}{\epsilon},
\]
the claimed runtime follows.
\end{proof}

\begin{algorithm}[t]
\caption{\textsc{LowRankWassersteinTiltSampler}: low-rank Wasserstein tilt}
\label{alg:lowrank-wasserstein}
\begin{algorithmic}[1]
  \State \textbf{Input:} score oracle \((s_\sigma)_\sigma\) for \(P\), matrix \(A\), value oracle for \(f\), parameters \(L,\lambda,\Cnorm,\epsilon,\delta\).
  \State Compute a compact SVD
  \[
      A=U\Sigma V_1^\top,
  \]
  where \(U\in\RR^{k\times r_A}\) and \(V_1\in\RR^{d\times r_A}\) have orthonormal columns. Let \(V_0\in\RR^{d\times(d-r_A)}\) complete \(V_1\) to an orthogonal basis.
  \State Set
  \[
      h\gets
      \min\left\{
          \frac{\epsilon}{6(LS+4\lambda\Cnorm)},
          \frac{\epsilon^2}{288\lambda^2\Cnorm^3}
      \right\},
      \qquad
      \epsilon_P\gets \frac{\epsilon}{24\lambda\Cnorm}.
  \]
  \State Draw
  \[
      \widetilde Y\leftarrow
      \textsc{BaseSampler}((s_\sigma)_\sigma,\epsilon_P,\delta,\Cnorm),
  \]
  and set
  \[
      \widehat Y\gets \Pi_{\CB_{d,2}(\Cnorm)}(\widetilde Y).
  \]
  \State Write the coordinates of \(\widehat Y\) in the SVD basis:
  \[
      u_{\widehat Y}:=V_1^\top \widehat Y,
      \qquad
      w_{\widehat Y}:=V_0^\top \widehat Y.
  \]
  \State Construct an \(h\)-net \(\mathcal N_h\) of \(\CB_{r_A,2}(\Cnorm)\).
  \For{\(u\in\mathcal N_h\)}
    \State Set
    \[
        \rho(u):=\sqrt{\Cnorm^2-\|u\|_2^2}.
    \]
    \State Let \(w(u)\) be the Euclidean projection of \(w_{\widehat Y}\) onto
    \(\CB_{d-r_A,2}(\rho(u))\), i.e.
    \[
        w(u)
        :=
        \begin{cases}
        w_{\widehat Y}, & \|w_{\widehat Y}\|_2\le \rho(u),\\[3pt]
        \rho(u)\,w_{\widehat Y}/\|w_{\widehat Y}\|_2, & \|w_{\widehat Y}\|_2>\rho(u).
        \end{cases}
    \]
    \State Define
    \[
        x(u):=V_1u+V_0w(u).
    \]
    \State Compute
    \[
        \Phi_{\widehat Y}(u)
        :=
        f(U\Sigma u)-\lambda\|x(u)-\widehat Y\|_2^2.
    \]
  \EndFor
  \State Let
  \[
      \widehat u\in\argmax_{u\in\mathcal N_h}\Phi_{\widehat Y}(u).
  \]
  \State \Return \(x(\widehat u)\).
\end{algorithmic}
\end{algorithm}

\end{appendices}

\ifneurips
\newpage
\input{checklist.tex}
\fi

\end{document}